\pdfoutput=1
\documentclass[10pt,twocolumn,letterpaper]{article}
\usepackage[letterpaper,left=0.75in,right=0.75in,top=0.8in,bottom=0.85in,columnsep=0.3in,footskip=0.35in]{geometry}
\usepackage[T1]{fontenc}
\usepackage{amsmath}
\usepackage{amsthm}
\usepackage{mathtools}
\usepackage{newtxtext}
\usepackage{newtxmath}
\usepackage[varqu,varl,scaled=0.95]{zi4}
\usepackage[hyphens]{url}
\usepackage{graphicx}
\usepackage{xcolor}
\usepackage{natbib}
\usepackage{caption}
\usepackage{placeins}
\usepackage{algorithm}
\usepackage{algorithmic}
\usepackage{array}
\usepackage{booktabs}
\usepackage{multirow}
\usepackage{titlesec}
\titleformat{\section}{\large\bfseries}{\thesection}{0.6em}{}
\titlespacing*{\section}{0pt}{2.0ex plus 0.6ex minus 0.3ex}{1.0ex plus 0.2ex}
\titleformat{\subsection}{\normalsize\bfseries}{\thesubsection}{0.5em}{}
\titlespacing*{\subsection}{0pt}{1.6ex plus 0.5ex minus 0.3ex}{0.6ex plus 0.1ex}
\titleformat{\subsubsection}{\normalsize\bfseries\itshape}{\thesubsubsection}{0.5em}{}
\titlespacing*{\subsubsection}{0pt}{1.3ex plus 0.4ex minus 0.2ex}{0.5ex plus 0.1ex}
\titleformat{\paragraph}[runin]{\normalsize\bfseries}{}{0pt}{}
\titlespacing*{\paragraph}{0pt}{1.0ex plus 0.4ex minus 0.2ex}{0.6em}
\usepackage{microtype}
\definecolor{linkblue}{RGB}{28,58,140}
\usepackage{hyperref}
\hypersetup{colorlinks=true,linkcolor=linkblue,citecolor=linkblue,urlcolor=linkblue,
  pdftitle={PXtal: Learning to Align Powder X-Ray Diffraction and Crystal Structures under Information Asymmetry across Modalities},
  pdfauthor={Zhuoran Yang, Christopher M. Collins, Bei Peng, Luke M. Daniels, Matthew J. Rosseinsky, Vladimir V. Gusev}}
\newcommand{\appref}[1]{\hyperref[#1]{Appendix~\ref*{#1}}}

\theoremstyle{plain}
\newtheorem{theorem}{Theorem}[section]
\newtheorem{proposition}[theorem]{Proposition}

\newtheorem{corollary}[theorem]{Corollary}
\theoremstyle{definition}

\theoremstyle{remark}

\newtheorem*{propositionrestated}{Proposition \ref{thm:gkl_decomposition} (restated)}
\newtheorem*{corollaryrestated}{Corollary \ref{cor:conditional_ce} (restated)}

\title{\textbf{PXtal: Learning to Align Powder X-Ray Diffraction and Crystal Structures under Information Asymmetry across Modalities}}
\author{
Zhuoran Yang$^{1}$,\enskip
Christopher M. Collins$^{2}$,\enskip
Bei Peng$^{3}$,\enskip
Luke M. Daniels$^{4}$,\\
Matthew J. Rosseinsky$^{2,4}$,\enskip
Vladimir V. Gusev$^{1}$\\[0.6em]
{\small $^{1}$School of Computer Science and Informatics, University of Liverpool, Liverpool, UK}\\
{\small $^{2}$Leverhulme Research Centre for Functional Materials Design, Materials Innovation Factory, University of Liverpool, Liverpool, UK}\\
{\small $^{3}$School of Computer Science, University of Sheffield, Sheffield, UK}\\
{\small $^{4}$Department of Chemistry, University of Liverpool, Liverpool, UK}
}
\date{}

\begin{document}
\maketitle
\begingroup
\renewcommand\thefootnote{}
\footnotetext{\raggedright Preprint. Under review. Correspondence to: \mbox{Matthew J. Rosseinsky} \textless\href{mailto:M.J.Rosseinsky@liverpool.ac.uk}{M.J.Rosseinsky@liverpool.ac.uk}\textgreater, \mbox{Vladimir V. Gusev} \textless\href{mailto:Vladimir.Gusev@liverpool.ac.uk}{Vladimir.Gusev@liverpool.ac.uk}\textgreater.}
\addtocounter{footnote}{-1}
\endgroup

\begin{abstract}
Scientific multimodal learning commonly assumes that paired views are
comparably informative. Powder X-ray diffraction (PXRD) makes this mismatch
explicit: compressing a three-dimensional crystal structure into a
one-dimensional diffraction pattern loses information and makes the pattern
harder to connect to the crystal structure that produced it. We introduce
\textbf{PXtal}, a framework for learning aligned PXRD and crystal
representations under this physically imposed information asymmetry.
PXtal uses Unbalanced Optimal Transport (UOT) to adapt the cross-modal coupling
and coupling-level generalized Kullback--Leibler (GKL) divergence to supervise
the full transport plan.
Across six test sets,
including four zero-shot transfer sets, PXtal consistently
outperforms the baseline models in PXRD-to-crystal candidate retrieval, with
the largest gains when PXRD patterns have close but crystallographically
distinct nonpaired neighbors, meaning similar input patterns associated with
different crystals. The resulting crystal and PXRD encoders transfer more
effectively to downstream materials and crystallographic tasks.
These results identify information asymmetry as a general design problem in
scientific multimodal learning: alignment objectives should reflect what each
modality preserves.
\end{abstract}

\section{Introduction}

Powder X-ray diffraction (PXRD) probes crystalline materials by measuring
scattered X-ray intensity as a function of diffraction angle. It is widely used
to identify crystalline phases and assess synthesis products, yet interpreting
the resulting patterns is often challenging and time consuming
\citep{gulay2024navigation}. Constructive scattering from the many small,
randomly oriented crystallites in a powder
produces Bragg peaks corresponding to characteristic interplanar separations in
the structure; their intensities depend on the atoms present and their arrangement
\citep{dinnebier2008principles,young1993rietveld}.
Each allowed reflection has three Miller indices $(h,k,l)$ that label a family
of lattice planes. An idealized intensity profile for a single phase can then be
written as
\begin{equation}
  I(2\theta)
  \approx b(2\theta)
  +\sum_{hkl}w_{hkl}|F_{hkl}|^2
  \phi(2\theta-2\theta_{hkl}).
  \label{eq:pxrd_profile}
\end{equation}
Here $I(2\theta)$ is the intensity at scattering angle $2\theta$ and
$b(2\theta)$ is the background. Bragg's law,
$2d_{hkl}\sin\theta_{hkl}=\lambda$, links the position of reflection $hkl$ to
the plane spacing $d_{hkl}$ for X-ray wavelength $\lambda$. The complex
structure factor $F_{hkl}$ combines the scattering contributions from atoms in
the unit cell; its phase arises mathematically from their positions. The weight
$w_{hkl}$ collects geometric and experimental factors together with reflection
multiplicity, the number of symmetry-equivalent reflections and thus planes
associated with $hkl$. The function $\phi$ describes the peak profile.
\appref{app:pxrd} gives the expanded relation and defines each term.

\begin{figure}[t]
  \centering
  \includegraphics[width=\columnwidth]{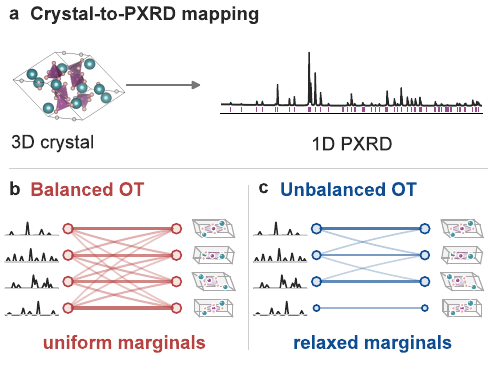}
\caption{PXtal overview. (a) The crystal-to-PXRD map creates physically imposed
  information asymmetry. In (b) and (c), each node represents an encoded PXRD
  pattern or crystal structure; line width and node area encode coupling and
  marginal mass. (b) Balanced OT uses uniform marginals and a probability
  measure. (c) UOT relaxes the marginals and yields a nonnegative measure.}
  \label{fig:pxrd_measurement}
\end{figure}

The measured intensity profile contains the contribution $|F_{hkl}|^2$ but not
the complex phase of $F_{hkl}$. Diffraction from an individual crystallite is
direction resolved; powder averaging collapses these directions into one
angular coordinate, and finite resolution merges nearby reflections. Although
a crystal structure can be projected into a PXRD pattern, the lost scattering
information makes an observed pattern more difficult to connect to its
underlying structure in three dimensions. Conventional analysis therefore first
narrows candidate phases through database search and match \citep{altomare2008} and
uses indexing to establish compatible unit-cell geometry and peak positions
\citep{pawley1981}. A candidate structural model is then refined against the
whole profile by Rietveld analysis
\citep{rietveld1969profile,toby2013}.

These effects leave the paired views with unequal information. A crystal description
specifies atom types, fractional coordinates, lattice parameters, and symmetry in
three dimensions, whereas its PXRD pattern records only the aggregate reflection
intensities in Eq.~\eqref{eq:pxrd_profile} after phase loss, powder averaging,
and peak overlap. We refer to this mismatch as \emph{information asymmetry across
modalities}. Related asymmetries can produce
modality gaps and information imbalance
\citep{liang2022mind,schrodi2025two}, as well as failures of concept association
and compositional binding
\citep{gurung2025common,yuksekgonul2023when,tang2023lemons}. PXRD provides a
physically grounded instance with an observable consequence: some patterns are
distinctive, whereas others have close nonpaired neighbors.

Machine learning methods for PXRD classify phases and symmetry
\citep{lee2020deep,wang2020rapid}, support probabilistic interpretation
\citep{szymanski2021probabilistic}, and benchmark
classification on simulated patterns
\citep{oviedo2019fast,cao2025simxrd}. More directly, XtalNet
\citep{lai2025end} and PXRDGen \citep{li2025powder} first align PXRD and crystal
embeddings with the contrastive InfoNCE objective. Their PXRD embeddings then condition generative
structure models. PXRDnet \citep{guo2025ab} takes a different route: it
generates candidate structures from PXRD and chemical formulae through a latent
crystal model coupled to a PXRD regressor. All three models use PXRD
representations learned from paired crystal structures to guide crystal
structure generation, but do not focus on the joint representation problem.
We formulate this problem as cross-modal alignment under physically imposed
information asymmetry. Because crystallographically distinct structures can
produce similar PXRD patterns, the same pair confidence is not equally
supported for every sample. InfoNCE nevertheless assigns every pair the same
target confidence and treats all nonpaired batch members as negatives,
regardless of whether their PXRD patterns are distant or close.

To address this mismatch, we introduce \textbf{PXtal}, a framework that learns
PXRD and crystal encoders in a shared representation space
(Fig.~\ref{fig:pxrd_measurement}). PXtal forms a batch-level transport coupling
and lets its marginals adapt across samples to the strength of competing
matches. Our primary model, \textbf{PXtal-UOT$_{\mathrm{GKL}}$}, treats this
coupling as a nonnegative measure and directly applies coupling-level
generalized Kullback--Leibler (GKL) supervision
\citep{peyre2019computational,sejourne2023unbalanced}.
For pretraining at scale, \textbf{PXtal-UFGOT} adds fused Gromov Wasserstein
terms that preserve relations within each modality
\citep{vayer2019optimal,sejourne2021unbalancedgw}.

We evaluate PXtal on three tasks.
\textbf{PXRD-to-crystal candidate retrieval.} Given an input PXRD pattern, the
pretrained cross-modal similarity ranks candidate crystal structures,
supporting database search and match before detailed confirmation and
refinement \citep{altomare2008}. \textbf{Crystal encoder evaluation.} The
crystal encoder is used to predict band gap, formation energy, metallicity,
and bulk and shear moduli, testing transfer to electronic, energetic, and elastic
properties used in materials screening \citep{dunn2020benchmarking}.
\textbf{PXRD encoder evaluation.} The PXRD encoder is used to predict crystal
system, space group, and unit-cell geometry from diffraction alone, providing
symmetry hypotheses and initial lattice estimates for indexing and structure solution
\citep{park2017classification,chitturi2021automated}.
\textbf{In relation to autonomous materials workflows.}
Autonomous laboratories couple robotic experiments to algorithmic decisions
\citep{burger2020mobile,szymanski2023autonomous,szymanski2023adaptive}. In
inorganic synthesis, automating these tasks links PXRD analysis to the experimental
feedback loop: phase identification returns synthesis outcomes to the planning
system, while online analysis can steer subsequent diffraction measurements.

\paragraph{Contributions.}
\textbf{(1) Problem formulation.} We identify joint PXRD and crystal representation
learning as cross-modal alignment under physically imposed information
asymmetry. \textbf{(2) Adaptive alignment.}
PXtal-UOT$_{\mathrm{GKL}}$ treats the transport plan as a nonnegative measure,
adapts its marginals with UOT, and directly supervises the full plan using
coupling-level GKL. PXtal-UFGOT extends this objective with within-modality
relations. \textbf{(3) Evaluation and transfer.} We conduct the first joint
PXRD and crystal representation pretraining study at million-pair scale, using
up to 2.05 million pairs, and evaluate PXRD-to-crystal retrieval across six
test sets, including four zero-shot transfer sets. PXtal improves retrieval
most when PXRD patterns have close but crystallographically distinct nonpaired
neighbors, and both encoders transfer effectively to downstream tasks.

\section{Related Work and Preliminaries}

\subsection{Contrastive Multimodal Learning}
\label{subsec:contrastive_related}

Contrastive learning separates paired observations from alternatives in
continuous representation spaces. Predictive objectives yielded InfoNCE
\citep{oord2018representation}, while instance discrimination enabled scalable
visual pretraining \citep{wu2018unsupervised,he2020momentum,chen2020simple}.

For paired modalities, let $\mathbf z_i^x$ and $\mathbf z_i^y$ be the
normalized encodings of paired observations. One directional InfoNCE term is
\[
  \mathcal L_{x\to y}
  =
  -\frac{1}{N}\sum_{i=1}^{N}
  \log
  \frac{\exp(s(\mathbf z_i^x,\mathbf z_i^y)/\tau)}
       {\sum_{j=1}^{N}\exp(s(\mathbf z_i^x,\mathbf z_j^y)/\tau)},
\]
where $N$ is the batch size, $s$ is the cross-modal similarity, and
$\tau>0$ is the contrastive temperature. CLIP combines this term with its
reverse direction for paired image--text learning
\citep{radford2021learning}. Subsequent work scales multimodal pretraining or
replaces the training objective
\citep{jia2021scaling,zhai2023sigmoid}.

Recent analyses describe information asymmetry across modalities at
complementary levels. Modality gap captures separation at the representation
level \citep{liang2022mind}; information imbalance links such separation to
object bias \citep{schrodi2025two}. Concept association bias and weak binding
of objects to attributes, relations, and order provide behavioral and semantic
manifestations
\citep{gurung2025common,yuksekgonul2023when,tang2023lemons}.

\subsection{Optimal Transport and Inverse Alignment}
\label{subsec:classical_ot}

Optimal transport compares discrete probability measures with weights
$\mathbf a\in\Delta_N$ and $\mathbf b\in\Delta_M$ through a probability
coupling with prescribed marginals. The classical problem is a linear program
\citep{kantorovitch1958translocation,peyre2019computational};
Benamou and Brenier give an equivalent dynamic formulation
\citep{benamou2000computational}. The admissible couplings are
\[
  \mathcal U(\mathbf a,\mathbf b)
  =
  \{\mathbf P\in\mathbb R_+^{N\times M}:
    \mathbf P\mathbf 1_M=\mathbf a,
    \mathbf P^\top\mathbf 1_N=\mathbf b\},
\]
where $\mathbf 1_n$ is the length-$n$ all-ones vector and
$\Delta_n=\{\mathbf w\in\mathbb R_+^n:
\mathbf 1_n^\top\mathbf w=1\}$ is the probability simplex.
For a cost matrix $\mathbf C$ and entropy weight $\varepsilon>0$, entropic OT solves
\begin{equation}
  \mathbf P^{\mathrm{OT}}
  =
  \arg\min_{\mathbf P\in\mathcal U(\mathbf a,\mathbf b)}
  \langle\mathbf C,\mathbf P\rangle
  -\varepsilon\,\mathcal H(\mathbf P),
  \label{eq:eot_prelim}
\end{equation}
where $\mathcal H(\mathbf P)=-\sum_{ij}P_{ij}(\log P_{ij}-1)$ is the coupling
entropy, with $0\log0:=0$, and Sinkhorn scaling enforces the prescribed marginals
\citep{sinkhorn1967diagonal,peyre2019computational}.
Inverse optimal transport learns costs from observed assignments
\citep{li2019learning,stuart2020inverse}. The corresponding contrastive
implementations use conditional cross entropy to supervise normalized couplings
\citep{shi2023understanding,shi2024ot}.

\section{Methods}
\label{sec:methods}

\paragraph{Overview.}
Each batch of paired PXRD patterns and crystal structures yields a learned cost
matrix across modalities. \textbf{PXtal-CLIP} applies symmetric InfoNCE to the
two encoder outputs. \textbf{PXtal-OT} replaces the separate conditional
normalizations with one balanced batch coupling. \textbf{PXtal-UOT} then
relaxes the fixed marginals, allowing coupling mass to vary across samples.
Applying conditional cross entropy to this plan gives
\textbf{PXtal-UOT$_{\mathrm{CE}}$}; applying GKL to the full nonnegative
measure gives \textbf{PXtal-UOT$_{\mathrm{GKL}}$}.
\textbf{PXtal-UFGOT} adds within-modality relational distortion to the same
unbalanced coupling, with \textbf{PXtal-FGOT} as its balanced counterpart.

\subsection{PXtal Variants}
\label{subsec:PXtal}

\begin{table*}[!t]
  \centering
  {\small
  \setlength{\tabcolsep}{1mm}
  \begin{tabular*}{\textwidth}{@{\extracolsep{\fill}}llrrrr@{}}
    \toprule
    \textbf{Model} & \textbf{Objective} & \textbf{Epochs} & \textbf{Mean (s/epoch)}
      & \textbf{Median (s/epoch)} & \textbf{Overhead} \\
    \midrule
    PXtal-CLIP & InfoNCE & 100 & 133.93 & 133.59 & 0.00\% \\
    PXtal-OT & OT & 100 & 134.65 & 134.03 & 0.54\% \\
    PXtal-UOT$_{\mathrm{GKL}}$ & UOT & 100 & 134.89 & 134.70 & 0.72\% \\
    PXtal-UFGOT & UFGOT & 100 & 137.87 & 137.31 & 2.95\% \\
    PXtal-FGOT & FGOT & 100 & 141.85 & 141.46 & 5.92\% \\
    \bottomrule
  \end{tabular*}
  }
  \caption{Matched CrystaLLM-v1 pretraining time under the same four-node,
  16-GPU setup and 100-epoch schedule. Mean and median report wall-clock
  seconds per epoch; overhead is relative to PXtal-CLIP. Full training
  settings are in \appref{app:repro}.}
  \label{tab:runtime}
\end{table*}

\subsubsection{Common Setup}

Given paired PXRD patterns and crystal structures
$\{(x_i^{P},x_i^{C})\}_{i=1}^{N}$, the PXRD Transformer $f_{\theta}$ and
crystal graph neural network $g_{\psi}$ produce $d$-dimensional embeddings
$\mathbf z_i^{P}=f_{\theta}(x_i^{P})$ and
$\mathbf z_i^{C}=g_{\psi}(x_i^{C})$. After
$\ell_2$ normalization, their cosine similarity is
$s^{*}_{ij}=\langle\mathbf z_i^{P},\mathbf z_j^{C}\rangle$, and the transport cost is
$C^{\Theta}_{ij}=1-s^{*}_{ij}$ for $\Theta=\{\theta,\psi\}$. Unless stated
otherwise, $\tilde{\mathbf P}=N^{-1}\mathbf I_N$ and
$\mathbf a=\mathbf b=N^{-1}\mathbf 1_N$.
PXtal-CLIP applies the symmetric InfoNCE objective from
Sec.~\ref{subsec:contrastive_related} to these embeddings.

\subsubsection{Balanced Optimal Transport: PXtal-OT}
\label{subsubsec:pxtal-ot}

PXtal-OT uses the same learned similarities as PXtal-CLIP but replaces the
separate row and column normalizations with one balanced coupling. It uses
the coupling
$\mathbf P_{\Theta}^{\mathrm{OT}}$ from Eq.~\eqref{eq:eot_prelim} and minimizes
$\mathrm{KL}(\tilde{\mathbf P}\|\mathbf P_{\Theta}^{\mathrm{OT}})$. PXtal-OT
therefore learns a joint batch coupling under prescribed uniform row and
column marginals. It provides the balanced reference for testing whether
sample-dependent marginal mass improves alignment.

\subsubsection{Unbalanced Optimal Transport Coupling}
\label{subsubsec:pxtal-uot}

Balanced OT fixes every row and column mass at $N^{-1}$. The UOT variants
replace these equalities with GKL penalties under entropy regularization
\citep{chizat2018scaling,peyre2019computational,
sejourne2023unbalanced}. UOT optimizes
$\mathbf P\in\mathbb R_+^{N\times N}$ as a nonnegative measure over PXRD and
crystal sample pairs, so its marginals and total mass can vary. For
nonnegative vectors, $D_{\mathrm{GKL}}$ denotes the generalized KL divergence,
the $\beta=1$ member of the beta divergence family
\citep{fevotte2011algorithms}; \appref{app:pxrd} gives its extended-value
definition. Given the learned cost $\mathbf C^\Theta$ and prior marginals
$\mathbf a,\mathbf b$, both UOT variants compute
\begin{equation}
  \begin{aligned}
    \mathbf P_{\Theta}^{\mathrm{UOT}}
    &=\arg\min_{\mathbf P\geq 0}\ \Bigl\{
      \langle\mathbf C^{\Theta},\mathbf P\rangle
      -\varepsilon\,\mathcal H(\mathbf P) \\
    &\quad+\alpha_{0}\,D_{\mathrm{GKL}}(\mathbf P\mathbf 1_N\|\mathbf a)
      +\alpha_{1}\,D_{\mathrm{GKL}}(\mathbf P^{\top}\mathbf 1_N\|\mathbf b)\Bigr\}.
  \end{aligned}
  \label{eq:ueot_inner}
\end{equation}
The positive coefficients $\alpha_0$ and $\alpha_1$ penalize deviations of
the learned row and column marginals from their priors, with balanced OT
recovered as the penalties tend to infinity. At finite penalties, the row
marginal $\mathbf u=\mathbf P\mathbf 1_N$, the column marginal
$\mathbf v=\mathbf P^\top\mathbf 1_N$, and the total mass are outputs of the
inner optimization. PXtal-UOT$_{\mathrm{CE}}$ applies conditional CE after
normalizing this plan, whereas PXtal-UOT$_{\mathrm{GKL}}$ applies an outer GKL
loss to the full measure. Section~\ref{sec:uot_gkl} shows which marginal signal
conditional normalization removes.

\subsubsection{Relational Extension: PXtal-UFGOT}
\label{subsubsec:pxtal-ufgot}

PXtal-UOT$_{\mathrm{GKL}}$ aligns paired PXRD patterns and crystal structures
through a direct cost across modalities. PXtal-UFGOT uses the same
coupling-level GKL supervision and also models relational geometry within each
modality. This extension follows
distributional views of
contrastive learning \citep{shi2023understanding,shi2024ot} and
fused Gromov Wasserstein transport
\citep{memoli2011gromov,peyre2016gromov,vayer2019optimal,sejourne2021unbalancedgw},
which combine a direct feature cost with relational structure.
Let $\mathbf D^P$ and $\mathbf D^C$ denote pairwise cosine similarities within
the two modality batches. Following the similarity-based structural
regularizer of \citet{xu2023regularized}, we use the bilinear score
\begin{equation}
  \mathcal B(\mathbf P;\mathbf D^P,\mathbf D^C)
  =
  \left\langle
    \mathbf P,\mathbf D^P\mathbf P\mathbf D^C
  \right\rangle .
  \label{eq:fgot_gw_term}
\end{equation}
This score rewards couplings that preserve relational neighborhoods across
PXRD and crystal representations. Since
$\nabla_{\mathbf P}\mathcal B(\mathbf P;\mathbf D^P,\mathbf D^C)
=2\mathbf D^P\mathbf P\mathbf D^C$, we absorb the factor of two into the
nonnegative coefficient $\rho$ and set
$\mathbf C_{\mathrm{rel}}^{(r)}
=\mathbf C^\Theta-\rho\mathbf D^P\mathbf P^{(r)}\mathbf D^C$.
Starting from $\mathbf P^{(0)}=\mathbf a\mathbf b^\top$, one proximal step is
\begin{equation}
  \begin{aligned}
    \mathbf P^{(r+1)}
    &=\arg\min_{\mathbf P\geq 0}\ \Bigl\{
      \langle\mathbf C_{\mathrm{rel}}^{(r)},\mathbf P\rangle
      -\varepsilon\,\mathcal H(\mathbf P) \\
    &\qquad+\alpha_0\,D_{\mathrm{GKL}}(\mathbf P\mathbf 1_N\|\mathbf a)
      +\alpha_1\,D_{\mathrm{GKL}}(\mathbf P^\top\mathbf 1_N\|\mathbf b)\\
    &\qquad+\tau_{\mathrm{prox}}\,
      D_{\mathrm{GKL}}(\mathbf P\|\mathbf P^{(r)})\Bigr\}.
  \end{aligned}
  \label{eq:ufgot_inner}
\end{equation}
We unroll a fixed number of the log-domain updates of
\citet{xu2023regularized}. PXtal-UOT is the
$\rho=\tau_{\mathrm{prox}}=0$ case with one outer step. The balanced ablation
\textbf{PXtal-FGOT} uses the same proximal outer step with prescribed uniform
marginals. \appref{app:ot} gives the unified outer update, inner solvers,
and solver settings.
We use PXtal-UFGOT when batch geometry is informative at scale;
PXtal-UOT$_{\mathrm{GKL}}$ remains the default direct coupling objective.

\subsubsection{From Conditional CE to GKL at the Coupling Level}
\label{sec:uot_gkl}

Conditional CE is a natural outer loss for PXtal-UOT, but it first normalizes
the predicted plan and therefore discards learned marginal mass.
PXtal-UOT$_{\mathrm{GKL}}$ instead treats the target and predicted plans as
nonnegative measures and applies GKL entrywise. The following chain rule makes
the removed term explicit.

\begin{proposition}[GKL decomposition for nonnegative couplings]
  \label{thm:gkl_decomposition}
  Let $\mathbf Q,\mathbf P\in\mathbb R_+^{N\times M}$ have strictly positive
  row sums $\mathbf r=\mathbf Q\mathbf 1_M$ and
  $\mathbf u=\mathbf P\mathbf 1_M$, and assume $P_{ij}>0$ whenever $Q_{ij}>0$.
  Define row conditionals
  $q(j\mid i)=Q_{ij}/r_i$ and $p(j\mid i)=P_{ij}/u_i$. Then
  \begin{equation}
    D_{\mathrm{GKL}}(\mathbf Q\|\mathbf P)
    =
    D_{\mathrm{GKL}}(\mathbf r\|\mathbf u)
    +
    \sum_{i=1}^{N} r_i\,
    \mathrm{KL}\!\left(q(\cdot\mid i)\|p(\cdot\mid i)\right).
    \label{eq:gkl_chain_main}
  \end{equation}
  The analogous decomposition holds for column marginals and column
  conditionals.
\end{proposition}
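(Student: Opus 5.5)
The plan is to expand both sides entrywise from the extended-value definition $D_{\mathrm{GKL}}(\mathbf Q\|\mathbf P)=\sum_{ij}\bigl(Q_{ij}\log(Q_{ij}/P_{ij})-Q_{ij}+P_{ij}\bigr)$, with $0\log(0/p):=0$, and match terms. The absolute-continuity hypothesis ($P_{ij}>0$ whenever $Q_{ij}>0$) guarantees every summand is finite. The strict positivity of $r_i$ and $u_i$ makes the conditionals $q(\cdot\mid i)$ and $p(\cdot\mid i)$ well-defined probability vectors on $\{1,\dots,M\}$. So no extended-value cases ($+\infty$) arise and the identity is one between finite sums.

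The key step is to substitute $Q_{ij}=r_i\,q(j\mid i)$ and $P_{ij}=u_i\,p(j\mid i)$ into the logarithm. For every pair with $Q_{ij}>0$ this splits it as
\[
\log\frac{Q_{ij}}{P_{ij}}=\log\frac{r_i}{u_i}+\log\frac{q(j\mid i)}{p(j\mid i)}.
\]
Pairs with $Q_{ij}=0$ contribute zero to both pieces by the convention $0\log 0=0$. Summing $Q_{ij}$ times the first piece over $j$ gives $r_i\log(r_i/u_i)$, using $\sum_j Q_{ij}=r_i$. Summing $Q_{ij}$ times the second piece gives $r_i\sum_j q(j\mid i)\log\bigl(q(j\mid i)/p(j\mid i)\bigr)=r_i\,\mathrm{KL}\bigl(q(\cdot\mid i)\|p(\cdot\mid i)\bigr)$. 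The linear terms collapse directly: $\sum_{ij}(P_{ij}-Q_{ij})=\sum_i(u_i-r_i)$. Adding these to the first piece over $i$ yields exactly $D_{\mathrm{GKL}}(\mathbf r\|\mathbf u)$, and the conditional terms give the weighted KL sum in Eq.~\eqref{eq:gkl_chain_main}.

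The column statement follows by applying the same argument to $\mathbf Q^\top$ and $\mathbf P^\top$, with the analogous positivity assumptions on the column sums. The only delicate point, which I expect to be the main, if mild, obstacle, is bookkeeping of zero entries. When $Q_{ij}=0$ but $P_{ij}>0$, the entry contributes only $P_{ij}$ to the left side. On the right side, the same mass $P_{ij}$ appears through $u_i$ in the marginal GKL term, while contributing nothing to the KL term because $q(j\mid i)=0$. I will verify this explicitly so that the split of the logarithm is applied only on the support of $\mathbf Q$.
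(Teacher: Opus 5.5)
Your proposal is correct and takes essentially the same route as the paper. You substitute $Q_{ij}=r_i\,q(j\mid i)$ and $P_{ij}=u_i\,p(j\mid i)$, split the logarithm into a marginal and a conditional log ratio, collapse the linear terms into $D_{\mathrm{GKL}}(\mathbf r\|\mathbf u)$, and get the column version by transposition; you are slightly more explicit than the paper in restricting the log split to the support of $\mathbf Q$, whereas the paper handles zero entries only through its $0\log 0$ conventions.
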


\emph{Proof sketch.} Substitute $Q_{ij}=r_iq(j\mid i)$ and
$P_{ij}=u_ip(j\mid i)$ into the definition of GKL. Splitting
$\log(Q_{ij}/P_{ij})$ into a marginal log ratio and a conditional log ratio,
and using that each conditional sums to one, yields
Eq.~\eqref{eq:gkl_chain_main}. The complete proof and extended value
conventions are given in \appref{app:pxrd}.

\begin{corollary}[Deterministic paired targets]
  \label{cor:conditional_ce}
  Let $\pi(i)$ be the target column for row $i$ and let
  $Q_{ij}=r_i\mathbb{I}\{j=\pi(i)\}$. Then
  \[
    D_{\mathrm{GKL}}(\mathbf Q\|\mathbf P)
    =
    D_{\mathrm{GKL}}(\mathbf r\|\mathbf u)
    -\sum_i r_i\log p(\pi(i)\mid i).
  \]
  For the identity target $\tilde{\mathbf P}=N^{-1}\mathbf I_N$, the second
  term is the usual conditional CE with hard labels. If the predicted row marginal
  is fixed to $\mathbf u=N^{-1}\mathbf 1_N$, conditional CE and GKL at the
  coupling level are exactly equal. Under finite UOT marginal penalties, conditional CE
  leaves the marginal GKL term unsupervised.
\end{corollary}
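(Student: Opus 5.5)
The plan is to specialize Proposition~\ref{thm:gkl_decomposition} to the deterministic target and then read off the three claims. First we check the hypotheses. With $Q_{ij}=r_i\mathbb{I}\{j=\pi(i)\}$, the row sums of $\mathbf Q$ are exactly $r_i$, and we take $r_i>0$. The support condition $Q_{ij}>0\Rightarrow P_{ij}>0$ reduces to $P_{i\pi(i)}>0$. This holds automatically for entropic UOT plans, which are strictly positive. Otherwise both sides equal $+\infty$ under the extended-value convention of \appref{app:pxrd}, and the identity still holds in that sense. Once the hypotheses are verified, Eq.~\eqref{eq:gkl_chain_main} applies directly.

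Next we compute the conditional term. The row conditional of the target is the point mass $q(\cdot\mid i)=\delta_{\pi(i)}$. With $0\log 0:=0$, this gives
\[
\mathrm{KL}\bigl(\delta_{\pi(i)}\|p(\cdot\mid i)\bigr)=\log\frac{1}{p(\pi(i)\mid i)}=-\log p(\pi(i)\mid i).
\]
Weighting by $r_i$ and summing over $i$ gives the displayed formula.

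For the identity target $\tilde{\mathbf P}=N^{-1}\mathbf I_N$, we have $\pi(i)=i$ and $r_i=N^{-1}$. The second term is then $-N^{-1}\sum_i\log p(i\mid i)$, where $p(i\mid i)=P_{ii}/\sum_j P_{ij}$. This is exactly the averaged hard-label conditional cross entropy on the row-normalized plan. When $\mathbf u=N^{-1}\mathbf 1_N=\mathbf r$, the marginal term vanishes, since $D_{\mathrm{GKL}}(\mathbf r\|\mathbf r)=0$. So the two objectives coincide exactly.

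Finally, under finite penalties $\alpha_0,\alpha_1$, the row marginal $\mathbf u$ of $\mathbf P^{\mathrm{UOT}}_\Theta$ is an output of Eq.~\eqref{eq:ueot_inner} and in general differs from $\mathbf r$. The key observation is that conditional CE depends on $\mathbf P$ only through the conditionals $p(\cdot\mid i)$: it is invariant under rescaling each row, $P_{ij}\mapsto c_iP_{ij}$. Such rescalings change $\mathbf u$ arbitrarily while leaving the CE unchanged. Hence CE carries no signal on $D_{\mathrm{GKL}}(\mathbf r\|\mathbf u)$, which is precisely the term it omits from the full GKL. This is the content of "leaves the marginal term unsupervised."

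No real obstacle is expected. The only delicate point is the extended-value bookkeeping when some $P_{i\pi(i)}=0$, and we would handle it by citing the convention in \appref{app:pxrd}.
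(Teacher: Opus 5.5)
Your proposal is correct and follows the paper's route. You specialize Proposition~\ref{thm:gkl_decomposition}, use that the point-mass conditional gives $\mathrm{KL}=-\log p(\pi(i)\mid i)$, and obtain the remaining claims from $\mathbf u=\mathbf r$ and the row-rescaling invariance of conditional CE, as in the paper's proof and its ``Effect of row rescaling'' paragraph. One caveat on your extended-value remark: it needs $u_i>0$ to stay in force, so that $p(\cdot\mid i)$ is still defined when $P_{i\pi(i)}=0$.
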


For any positive row-scaling vector $\mathbf c$, conditional CE remains
unchanged because row normalization removes the scaling factors:
\[
\mathcal L_{\mathrm{CE}}(\operatorname{diag}(\mathbf c)\mathbf P)
=\mathcal L_{\mathrm{CE}}(\mathbf P).
\]
It therefore cannot distinguish couplings with the
same row conditionals but different marginals. GKL does distinguish them
because its marginal term changes under row scaling. The objectives agree when
the marginals are fixed; when UOT allows them to vary, GKL also supervises the
marginals. The same argument applies to entropic-OT soft targets.

\paragraph{Training objective.}
PXtal-UOT$_{\mathrm{GKL}}$ directly applies GKL between the target and
predicted transport measures at the coupling level:
$\mathcal L_{\mathrm{GKL}}(\Theta)
=D_{\mathrm{GKL}}(\tilde{\mathbf P}\|
\mathbf P_\Theta^{\mathrm{UOT}})$.
PXtal-UOT$_{\mathrm{CE}}$ uses the same inner UOT solver; the two variants
differ only in whether they supervise the full measure or its normalized
conditionals.
\appref{app:ot} provides pseudocode for this objective and conditional CE.

\section{Experimental Setup}\label{sec:experiments}

\begin{table*}[!t]
  \centering
  {\small
  \setlength{\tabcolsep}{1mm}
  \begin{tabular*}{\textwidth}{@{\extracolsep{\fill}}ll*{7}{c}@{}}
    \toprule
    \multirow{2}{*}{\textbf{Dataset}} & \multirow{2}{*}{\textbf{Model}}
      & \multicolumn{4}{c}{\textbf{Exact retrieval}}
      & \textbf{Crystal system}
      & \multicolumn{2}{c}{\textbf{Space group}} \\
    \cmidrule(lr){3-6}\cmidrule(lr){7-7}\cmidrule(lr){8-9}
    & & Acc@1 (\%) & Acc@3 (\%) & Acc@5 (\%) & MRR
      & Acc@1 (\%) & Acc@1 (\%) & Acc@3 (\%) \\
    \midrule
    \multirow{6}{*}{MP-100}
      & XtalNet-CPCP
      & 65.13 & 84.46 & 89.07 & 75.78 & 87.09 & 81.48 & 90.36 \\
      & PXRDGen-PXE$^{\dagger}$
      & 65.10 & 82.12 & 85.96 & 74.68 & 84.94 & 79.11 & 88.10 \\
      & PXtal-CLIP
      & 67.26 & 84.67 & 88.34 & 76.77 & 87.12 & 81.41 & 90.21 \\
      & PXtal-OT
      & \underline{71.71} & \underline{87.91} & \underline{91.05}
      & \underline{80.43} & 89.05 & \underline{84.46} & \underline{92.13} \\
      & PXtal-UOT$_{\mathrm{CE}}$
      & 70.82 & 87.28 & 90.46 & 79.71 & \underline{89.34} & 84.15 & 91.76 \\
      & \textbf{PXtal-UOT$_{\mathrm{GKL}}$}
      & \textbf{74.45} & \textbf{89.97} & \textbf{93.34}
      & \textbf{82.87} & \textbf{90.57} & \textbf{86.73} & \textbf{93.57} \\
    \midrule
    \multirow{5}{*}{CrystaLLM-v1}
      & PXtal-CLIP
      & 97.73 & 98.70 & 98.95 & 98.29 & 99.20 & 99.13 & 99.72 \\
      & PXtal-OT
      & 98.46 & 99.25 & 99.39 & 98.90 & 99.52 & 99.47 & \underline{99.88} \\
      & PXtal-FGOT
      & 98.29 & 99.22 & 99.35 & 98.79 & 99.52 & 99.48 & 99.86 \\
      & PXtal-UFGOT
      & \underline{98.54} & \underline{99.42} & \underline{99.56}
      & \underline{99.03} & \underline{99.55} & \underline{99.49} & 99.85 \\
      & \textbf{PXtal-UOT$_{\mathrm{GKL}}$}
      & \textbf{98.82} & \textbf{99.53} & \textbf{99.66}
      & \textbf{99.22} & \textbf{99.57} & \textbf{99.54} & \textbf{99.89} \\
    \bottomrule
  \end{tabular*}
  }
  \caption{PXRD-to-crystal candidate retrieval on MP-100 and CrystaLLM-v1 test
  sets. MP-100 uses CSPNet except
  PXRDGen-PXE$^{\dagger}$, which uses CellNet; CrystaLLM-v1 uses GemNet-T.
  Bold and underline indicate the best and second-best models within each
  dataset, respectively.}
  \label{tab:retrieval_test_sets}
\end{table*}

\begin{table*}[!t]
  \centering
  {\small
  \setlength{\tabcolsep}{1mm}
  \begin{tabular*}{\textwidth}{@{\extracolsep{\fill}}l*{12}{c}@{}}
    \toprule
    \multirow{2}{*}{\textbf{Model}}
      & \multicolumn{3}{c}{JARVIS-DFT-3D}
      & \multicolumn{3}{c}{Alex-MP-20}
      & \multicolumn{3}{c}{QMOF}
      & \multicolumn{3}{c}{Perov-5} \\
    \cmidrule(lr){2-4}\cmidrule(lr){5-7}\cmidrule(lr){8-10}\cmidrule(lr){11-13}
      & \shortstack{\textbf{Exact}\\\textbf{retrieval}}
      & \shortstack{\textbf{Crystal}\\\textbf{system}}
      & \shortstack{\textbf{Space}\\\textbf{group}}
      & \shortstack{\textbf{Exact}\\\textbf{retrieval}}
      & \shortstack{\textbf{Crystal}\\\textbf{system}}
      & \shortstack{\textbf{Space}\\\textbf{group}}
      & \shortstack{\textbf{Exact}\\\textbf{retrieval}}
      & \shortstack{\textbf{Crystal}\\\textbf{system}}
      & \shortstack{\textbf{Space}\\\textbf{group}}
      & \shortstack{\textbf{Exact}\\\textbf{retrieval}}
      & \shortstack{\textbf{Crystal}\\\textbf{system}}
      & \shortstack{\textbf{Space}\\\textbf{group}} \\
    \midrule
    PXtal-CLIP
      & 61.66 & 82.16 & 72.30 & 54.48 & 78.83 & 70.48
      & 11.09 & 53.12 & 36.33 & 19.31 & 57.99 & 57.99 \\
    PXtal-OT
      & 64.98 & 84.93 & 76.42 & 56.87 & 80.62 & 72.41
      & 16.32 & 51.67 & 37.87 & 20.18 & 60.05 & 60.03 \\
    PXtal-FGOT
      & 65.36 & 84.33 & 75.99 & 56.39 & 80.07 & 71.98
      & \underline{19.46} & \underline{55.30} & \textbf{42.76}
      & 21.74 & 59.34 & 59.34 \\
    PXtal-UFGOT
      & \underline{68.89} & \underline{89.53} & \underline{81.05}
      & \underline{58.51} & \textbf{83.99} & \underline{75.37}
      & 18.56 & 54.64 & 41.75
      & \underline{23.46} & \underline{62.99} & \underline{62.96} \\
    \textbf{PXtal-UOT$_{\mathrm{GKL}}$}
      & \textbf{70.18} & \textbf{90.22} & \textbf{82.18}
      & \textbf{59.24} & \underline{83.94} & \textbf{75.91}
      & \textbf{20.99} & \textbf{55.87} & \underline{42.54}
      & \textbf{25.63} & \textbf{64.23} & \textbf{64.23} \\
    \bottomrule
  \end{tabular*}
  }
  \caption{Zero-shot PXRD-to-crystal retrieval on four external datasets without
  target-set training or fine tuning. All models use GemNet-T checkpoints
  pretrained on CrystaLLM-v1. The columns report paired-crystal,
  crystal-system, and space-group Acc@1 (\%). Bold and underline indicate the
  best and second-best models; complete metrics are in \appref{app:transfer}.}
  \label{tab:retrieval_transfer}
\end{table*}

\paragraph{Datasets.}
We train on \textsc{MP-100}, a curated Materials Project benchmark
\citep{jain2013materialsproject}, and \textsc{CrystaLLM-v1}, which provides
2,285,702 crystal structures \citep{antunes2024crystal}; each structure is
paired with its simulated PXRD pattern. The four transfer sets cover distinct
classes of crystalline materials. JARVIS-DFT-3D contains a broad set of
three-dimensional inorganic crystals \citep{choudhary2020jarvis}.
Alex-MP-20 contains stable inorganic structures from the Materials Project and
Alexandria with at most 20 atoms per unit cell
\citep{schmidt2022alexandria,zeni2025mattergen}. QMOF comprises
metal--organic frameworks and coordination polymers
\citep{rosen2021qmof,rosen2022qmof}, whereas Perov-5 contains five-atom cubic
$ABX_3$ perovskites \citep{xie2022cdvae,castelli2012new}. Before evaluation,
we removed all crystal structures shared by these transfer sets and the
CrystaLLM-v1 pretraining split. Dataset construction, splits, and PXRD
simulation settings are given in \appref{app:repro}.

\paragraph{Models and optimization.}
MP-100 pretraining pairs a PXRD Transformer over dense patches
\citep{vaswani2017attention} with the CSPNet crystal encoder from DiffCSP
\citep{jiao2023crystal}.
Large-scale CrystaLLM-v1 pretraining pairs the same PXRD encoder with the
MatterGen implementation of GemNet-T
\citep{gasteiger2021gemnet,zeni2025mattergen}, using a global batch size of
8192. Table~\ref{tab:runtime} reports matched pretraining time across
objectives. \appref{app:repro} gives architecture and optimization details;
\appref{app:ot} gives the Optimal Transport solver settings.
\appref{app:retrieval} reports MP-100 retrieval after replacing the default
CSPNet with Matformer \citep{yan2022periodic} or GemNet-T.

\paragraph{Tasks and evaluation protocols.}
We evaluate PXRD-to-crystal candidate retrieval and downstream transfer of
each pretrained encoder. Retrieval ranks a fixed crystal library by
cross-modal cosine similarity. The crystal encoder is evaluated on five
Matbench property tasks; the PXRD encoder is evaluated on symmetry and
unit-cell prediction with probes and fine tuning.

\paragraph{Baselines and ablations.}
External retrieval baselines are XtalNet-CPCP \citep{lai2025end} and
PXRDGen-PXE \citep{li2025powder}. We evaluate both on MP-100 using their
published representation-learning setups; PXRDGen-PXE uses its published
CellNet encoder. Neither provides a CrystaLLM-v1 checkpoint. Comparisons on
CrystaLLM-v1 instead use the same GemNet-T architecture, training data, and
optimization protocol across PXtal objectives.
PXtal-CLIP is the matched InfoNCE baseline. Ablations include balanced
PXtal-OT and PXtal-FGOT and PXtal-UOT$_{\mathrm{CE}}$ with conditional CE.

\begin{figure*}[!t]
  \centering
  \includegraphics[width=\textwidth]{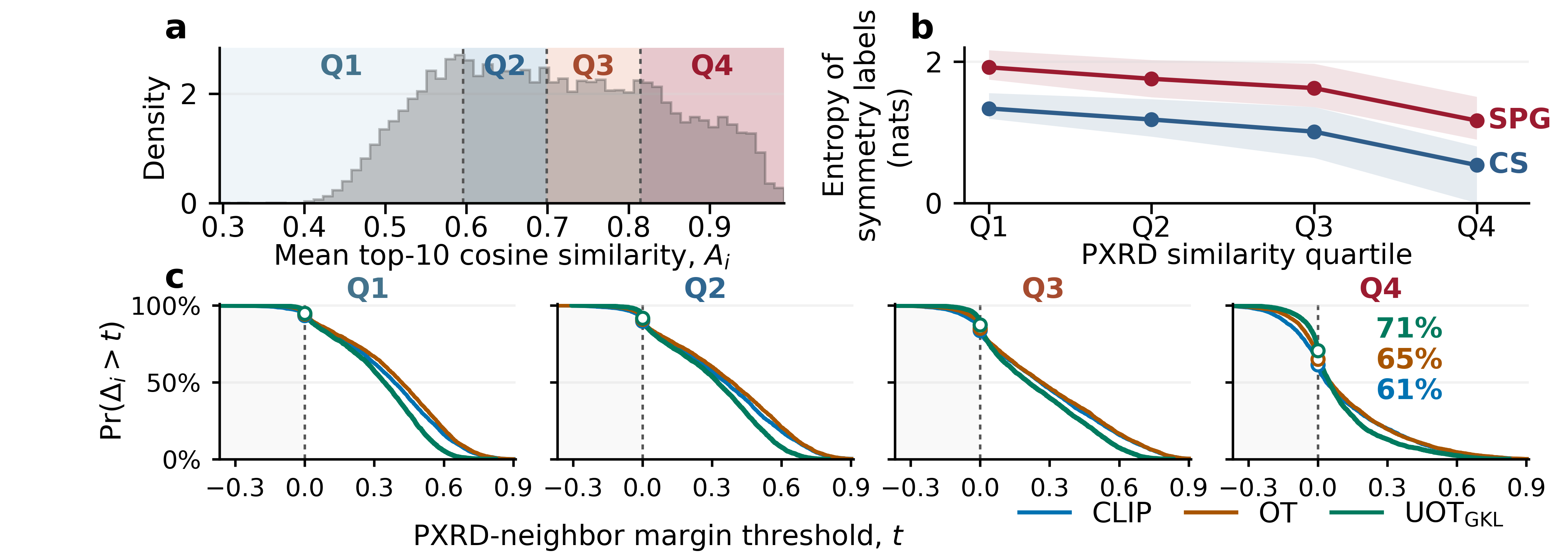}
  \caption{PXRD neighborhood similarity and local ranking on JARVIS-DFT-3D.
  (a) The neighborhood score $A_i$ averages cosine similarity to the ten
  nearest nonpaired PXRD patterns; Q1--Q4 run from lowest to highest similarity.
  (b) Space group (SPG) and crystal system (CS) label entropy across the same
  neighborhoods.
  (c) Survival curves for the PXRD-neighbor margin $\Delta_i$; at $t=0$, a
  positive margin means that the paired crystal outranks every candidate
  associated with the ten nearest nonpaired patterns.}
  \label{fig:information_imbalance_analysis}
\end{figure*}

\begin{table*}[!t]
  \centering
  {\small
  \setlength{\tabcolsep}{1mm}
  \begin{tabular*}{\textwidth}{@{\extracolsep{\fill}}lcccccc@{}}
    \toprule
    \textbf{Model}
      & \shortstack{\textbf{Band gap}\\MAE (eV)$\downarrow$}
      & \shortstack{\textbf{Formation energy}\\MAE (eV/atom)$\downarrow$}
      & \shortstack{\textbf{Metal/nonmetal}\\AUROC$\uparrow$}
      & \shortstack{\textbf{Metal/nonmetal}\\F1$\uparrow$}
      & \shortstack{\textbf{Bulk modulus}\\MAE $\log_{10}(\mathrm{GPa})\downarrow$}
      & \shortstack{\textbf{Shear modulus}\\MAE $\log_{10}(\mathrm{GPa})\downarrow$} \\
    \midrule
    ALIGNN
      & 0.1861 & 0.0215 & 0.9128 & 0.9015 & 0.0568 & 0.0715 \\
    coGN
      & 0.1559 & 0.0170 & 0.9124 & 0.9012 & 0.0535 & 0.0689 \\
    \midrule
    PXtal-CLIP
      & 0.1911 & 0.0231 & \underline{0.9045} & \underline{0.8924}
      & 0.0545 & 0.0729 \\
    PXtal-OT
      & \underline{0.1882} & 0.0229 & \textbf{0.9054} & \textbf{0.8933}
      & \textbf{0.0530} & 0.0720 \\
    PXtal-FGOT
      & \textbf{0.1881} & \textbf{0.0228} & 0.9039 & 0.8914
      & \underline{0.0532} & \underline{0.0716} \\
    PXtal-UFGOT
      & 0.1885 & \underline{0.0228} & 0.9040 & 0.8917
      & 0.0539 & 0.0727 \\
    \textbf{PXtal-UOT$_{\mathrm{GKL}}$}
      & 0.1885 & 0.0228 & 0.9036 & 0.8913
      & 0.0538 & \textbf{0.0715} \\
    \midrule
    From scratch
      & 0.2155 & 0.0245 & 0.8969 & 0.8837 & 0.0643 & 0.0829 \\
    \bottomrule
  \end{tabular*}
  }
  \caption{Crystal encoder transfer on five Matbench v0.1 tasks. MAE and
  AUROC use the official folds and scoring; F1 is also reported. PXtal rows
  fine tune GemNet-T; ALIGNN and coGN are public references. Values are
  five-fold means. Bold and underline mark the best and second-best PXtal
  models from unrounded values; full metrics are in \appref{app:transfer}.}
  \label{tab:matbench_5task_crystal_downstream}
\end{table*}

\section{Results}
\label{sec:results}

\subsection{PXRD-to-Crystal Candidate Retrieval}
\label{subsec:results_retrieval}

\paragraph{Evaluation metrics and candidate libraries.}
We report PXRD-to-crystal retrieval, the nonunique inverse of the
deterministic crystal-to-PXRD map under fixed simulation settings
(Eq.~\ref{eq:pxrd_profile}). Exact Acc@$k$ asks whether the designated crystal
appears among the top $k$ candidates, while mean reciprocal rank (MRR)
summarizes its position in the complete candidate list. Crystal-system and
space-group Acc@$k$ instead ask whether any top-$k$ candidate shares the
corresponding label with the designated crystal.
Table~\ref{tab:retrieval_test_sets} reports PXRD-to-crystal retrieval on MP-100
and CrystaLLM-v1; Table~\ref{tab:retrieval_transfer} reports zero-shot retrieval
on four external datasets.

\paragraph{Retrieval on the pretraining domains.}
Table~\ref{tab:retrieval_test_sets} shows that
PXtal-UOT$_{\mathrm{GKL}}$ leads every reported exact and symmetry-based
retrieval metric on both pretraining-domain test sets. On MP-100, it ranks the
designated crystal first for 74.45\% of PXRD patterns, compared with 67.26\% for
PXtal-CLIP, and raises space-group Acc@1 from 81.41\% to 86.73\%.
PXtal-UOT$_{\mathrm{CE}}$ uses the same unbalanced solver; replacing conditional
CE with coupling-level GKL raises exact Acc@1 from 70.82\% to 74.45\% and MRR
from 79.71 to 82.87 while improving every reported metric.
At million-pair scale on CrystaLLM-v1,
PXtal-UOT$_{\mathrm{GKL}}$ reaches 98.82\% exact Acc@1 and again leads every
reported metric.

\begin{table*}[!t]
  \centering
  {\small
  \setlength{\tabcolsep}{1mm}
  \begin{tabular*}{\textwidth}{@{\extracolsep{\fill}}l*{10}{c}@{}}
    \toprule
    \textbf{Model}
      & \multicolumn{2}{c}{\textbf{Crystal system}}
      & \multicolumn{2}{c}{\textbf{Space group}}
      & \multicolumn{2}{c}{\textbf{Lattice lengths}}
      & \multicolumn{2}{c}{\textbf{Lattice angles}}
      & \multicolumn{2}{c}{\textbf{Unit cell volume}} \\
    \cmidrule(lr){2-3}\cmidrule(lr){4-5}\cmidrule(lr){6-7}
    \cmidrule(lr){8-9}\cmidrule(lr){10-11}
    & Accuracy (\%)$\uparrow$ & F1 (\%)$\uparrow$
    & Accuracy (\%)$\uparrow$ & F1 (\%)$\uparrow$
    & MAE (\AA)$\downarrow$ & $R^2\uparrow$
    & MAE ($^\circ$)$\downarrow$ & $R^2\uparrow$
    & MAE (\AA$^3$)$\downarrow$ & $R^2\uparrow$ \\
    \midrule
    \multicolumn{11}{c}{\textbf{Probe}} \\
    \midrule
    PXtal-CLIP
      & 89.17 & 86.84 & 75.15 & 52.69 & 0.34 & 0.90 & 3.69 & 0.80 & 25.47 & \underline{0.90} \\
    PXtal-OT
      & 89.25 & 86.95 & 75.06 & 52.93 & 0.35 & 0.90 & 3.78 & 0.80 & 25.94 & 0.89 \\
    PXtal-FGOT
      & 88.65 & 86.25 & 74.79 & 52.62 & 0.36 & 0.90 & 3.90 & 0.79 & 26.20 & 0.89 \\
    PXtal-UFGOT
      & \textbf{90.18} & \textbf{88.08} & \textbf{76.78} & \underline{54.00}
      & \textbf{0.30} & \textbf{0.91} & \textbf{3.39} & \underline{0.81} & \underline{23.83} & 0.88 \\
    \textbf{PXtal-UOT$_{\mathrm{GKL}}$}
      & \underline{90.16} & \underline{87.96} & \underline{76.62} & \textbf{54.12}
      & \underline{0.30} & \underline{0.91} & \underline{3.39} & \textbf{0.81} & \textbf{23.48} & \textbf{0.90} \\
    \midrule
    \multicolumn{11}{c}{\textbf{Fine tuning}} \\
    \midrule
    PXtal-CLIP
      & 90.62 & 88.60 & 76.51 & \textbf{53.65} & 0.22 & 0.92 & 2.55 & 0.81 & 20.22 & 0.91 \\
    PXtal-OT
      & 90.84 & 88.76 & 76.43 & 50.94 & 0.21 & 0.93 & \underline{2.36} & 0.82 & 19.19 & \underline{0.91} \\
    PXtal-FGOT
      & 90.52 & 88.50 & 76.44 & 50.78 & 0.21 & 0.92 & \textbf{2.35} & \underline{0.82} & 19.37 & 0.91 \\
    PXtal-UFGOT
      & \underline{91.30} & \underline{89.23} & \textbf{77.74} & \underline{53.18}
      & \underline{0.21} & \textbf{0.93} & 2.44 & 0.82 & \textbf{19.02} & 0.91 \\
    \textbf{PXtal-UOT$_{\mathrm{GKL}}$}
      & \textbf{91.58} & \textbf{89.72} & \underline{77.53} & 52.97
      & \textbf{0.21} & \underline{0.93} & 2.40 & \textbf{0.83} & \underline{19.14} & \textbf{0.91} \\
    \midrule
    From scratch
      & 83.84 & 80.93 & 70.73 & 46.09 & 0.48 & 0.80 & 5.93 & 0.61 & 49.84 & 0.74 \\
    \bottomrule
  \end{tabular*}
  }
  \caption{PXRD encoder transfer on JARVIS-DFT-3D. Probes freeze the
  PXtal-pretrained PXRD Transformer; fine tuning updates its weights. Bold and
  underline mark the best and second-best pretrained models within each setting
  from unrounded values.}
  \label{tab:pxrd_encoder_downstream_jarvis}
\end{table*}

\paragraph{Zero-shot cross-dataset retrieval.}
Without training or fine tuning on the target dataset,
PXtal-UOT$_{\mathrm{GKL}}$ leads PXRD-to-crystal Acc@1 across all four transfer
sets (Table~\ref{tab:retrieval_transfer}). On JARVIS-DFT-3D, it raises Acc@1
from 61.66 to 70.18 and space-group Acc@1 from 72.30 to 82.18 over PXtal-CLIP.
Together with its strong MP-100 result, these gains cover distribution shifts
in Alex-MP-20 and Perov-5 and a clear material-class shift to QMOF.
Absolute accuracy is lower on QMOF and Perov-5: QMOF introduces extended
framework structures, while compositionally distinct Perov-5 crystals share a
common prototype and therefore highly similar reflection families.
PXtal-UOT$_{\mathrm{GKL}}$ still leads both sets, supporting transfer under
these distribution and material-class shifts.

\subsection{PXRD Neighborhood Structure and Local Ranking}
\label{subsec:pxrd_neighborhood_analysis}

\paragraph{PXRD neighborhood similarity.}
PXRD patterns differ in how specifically they constrain a crystal structure.
Before representation learning, for a pattern $x_i^P$, let
$r_{ij}=\mathrm{sim}(x_i^P,x_j^P)$ be the cosine similarity between normalized
intensity profiles and define the neighborhood score as
$A_i=k^{-1}\sum_{j\in\mathcal N_k^P(i)}r_{ij}$, where $k=10$ and
$\mathcal N_k^P(i)$ contains its nearest nonpaired patterns. Higher $A_i$
means a more \emph{crowded} input-space neighborhood. Quartiles run from the lowest
neighborhood similarity in Q1 to the highest in Q4
(Fig.~\ref{fig:information_imbalance_analysis}(a)), without using labels or
learned representations.

Space group (SPG) and crystal system (CS) label entropies decrease from Q1 to
Q4 (Fig.~\ref{fig:information_imbalance_analysis}(b)), so closer PXRD neighbors
increasingly share coarse symmetry labels. They remain crystallographically
distinct: for a pattern in Q4, the ten nearest nonpaired patterns span, on
average, 4.36 SPG and 2.43 CS labels. Even the closest PXRD neighborhoods
therefore contain competing crystal structures, with competition that varies
across patterns. PXtal-UOT addresses this variation through sample-dependent
coupling marginals.

\paragraph{Local ranking within PXRD neighborhoods.}
Having defined the neighborhoods in input space, we test whether the learned
similarity ranks the paired crystal above the crystals associated with its
nearest nonpaired PXRD patterns:
$\Delta_i=s(x_i^P,x_i^C)-\max_{j\in\mathcal N_k^P(i)}s(x_i^P,x_j^C)$, where
$s$ is the learned cosine similarity across modalities. A positive $\Delta_i$
means that the paired crystal outranks every neighboring candidate; otherwise,
a neighboring candidate ranks at least as high.
Figure~\ref{fig:information_imbalance_analysis}(c) plots $\Pr(\Delta_i>t)$.
At $t=0$, positive margins decrease from Q1 to Q4 for all three models, linking
greater PXRD neighborhood similarity to more local ranking reversals. In Q4,
the positive-margin fraction is 61.34\% for PXtal-CLIP, 65.03\% for PXtal-OT,
and 70.74\% for PXtal-UOT$_{\mathrm{GKL}}$. PXtal-UOT$_{\mathrm{GKL}}$
therefore ranks the paired crystal above every neighboring candidate for more
PXRD patterns in Q4, reducing local ranking reversals where the closest PXRD
alternatives compete.

\paragraph{Retrieval over the full candidate library.}
Acc@1 and MRR decrease from Q1 to Q4 for every model. Greater input-space
neighborhood similarity therefore predicts harder retrieval over the complete
candidate library, not only among the ten neighboring candidates.
PXtal-UOT$_{\mathrm{GKL}}$ leads every quartile, and its advantage over
PXtal-CLIP grows toward Q4 (\appref{app:transfer}). In Q4,
PXtal-UOT$_{\mathrm{GKL}}$ reduces local ranking reversals by 9.40 points
and improves full-library Acc@1 by 10.32 points over PXtal-CLIP.
The local and full-library results place its largest gain in crowded PXRD
neighborhoods, where similar patterns create the strongest competition among
distinct crystal structures.

\subsection{Crystal and PXRD Encoder Transfer}
\label{subsec:encoder_transfer}

\paragraph{Crystal encoder transfer.}
Using the five official Matbench v0.1 folds with the
\texttt{matbench} v0.6 package \citep{dunn2020benchmarking}, we
fine tune each PXtal-pretrained GemNet-T initialization on band gap, formation energy,
metal/nonmetal classification, bulk modulus, and shear modulus. With the
backbone and protocol fixed, every pretrained initialization improves all six
metrics over training from scratch
(Table~\ref{tab:matbench_5task_crystal_downstream}), transferring to electronic,
energetic, and elastic properties. ALIGNN
\citep{choudhary2021alignn} and coGN \citep{ruff2024cogn}, included as public
task-specific references, lead on most metrics.

\paragraph{PXRD encoder transfer.}
From PXRD alone, we predict crystal system, space group, and unit-cell geometry.
A probe freezes the encoder; fine tuning updates it. Every pretrained
Transformer outperforms training from scratch
(Table~\ref{tab:pxrd_encoder_downstream_jarvis}). PXtal-UFGOT and
PXtal-UOT$_{\mathrm{GKL}}$ lead the probe results, while
PXtal-UOT$_{\mathrm{GKL}}$ improves most displayed fine-tuning metrics over
PXtal-CLIP. \appref{app:transfer} reports MP-100 probes for all three crystal
encoders.

\section{Conclusion and Future Work}
We introduced PXtal, which treats joint PXRD and crystal representation
learning as cross-modal alignment under physically imposed information
asymmetry. By letting the transport coupling adapt to how strongly each pattern
constrains its structure, PXtal improves PXRD-to-crystal retrieval, especially
for patterns with close but crystallographically distinct neighbors, and yields
encoders that transfer to downstream tasks. Future work will extend this
approach to measured, multiphase, and open-set PXRD. More broadly, our results
suggest that alignment objectives in scientific multimodal learning should
reflect what each modality preserves.

\section*{Impact Statement}
This work aims to advance machine learning for materials
characterization and discovery by improving representation
learning between powder X-ray diffraction patterns and crystal structures. The main positive impact is scientific: better
PXRD and crystal alignment can make database search,
structure retrieval, and downstream materials-property modeling more data-efficient, especially when diffraction evidence is ambiguous. In the long term, such tools may help
researchers navigate large chemical and structural spaces,
reduce trial-and-error in materials screening, and accelerate
the identification of candidate functional materials.

\section*{Acknowledgements}
This work was supported by Johnson Matthey and UK
Engineering and Physical Sciences Research Council (EPSRC) via Digital Navigation of Chemical Space for Function (EP/V026887/1) and AI for Chemistry: AIchemy Hub
(EP/Y028775/1 and EP/Y028759/1). We thank Andrew
York, Tugce Eralp Erden and Edd Bilbe for valuable discussions and industrial mentorship, which helped shape
the motivation and interpretation of this work. V. V. Gusev is supported by the Leverhulme Research Fellowship.
We thank Andrij Vasylenko for supporting this work and
access to the Isambard-AI facility under project AIRR-IN
(BRIDGE, November 6, 2025), which provided the GPU
resources used in this work. The authors acknowledge the use of resources
provided by the Isambard-AI National AI Research Resource (AIRR). Isambard-AI
is operated by the University of Bristol and is funded by the UK Government's
Department for Science, Innovation and Technology (DSIT) via UK Research and
Innovation; and the Science and Technology Facilities Council
[ST/AIRR/I-A-I/1023]. Isambard-AI is described by
\citet{mcintoshsmith2024isambard}.

{\small
\bibliographystyle{preprintnat}
\bibliography{reference/reference}
}

\clearpage
\newgeometry{letterpaper,left=1in,right=1in,top=0.9in,bottom=0.9in,footskip=0.35in}
\onecolumn
\appendix
\phantomsection
\pdfbookmark[0]{Supplementary Material}{supplementary}
\begin{center}
{\LARGE\bfseries Supplementary Material}\\[0.7em]
{\large PXtal: Learning to Align Powder X-Ray Diffraction and Crystal Structures\\
under Information Asymmetry across Modalities}
\end{center}
\vspace{0.6em}
\noindent\rule{\textwidth}{0.4pt}
\vspace{0.2em}

\noindent{\bfseries Contents}\par\vspace{0.4em}
{\renewcommand{\arraystretch}{1.25}
\noindent\begin{tabular*}{\textwidth}{@{}l@{\hspace{1em}}l@{\extracolsep{\fill}}r@{}}
\ref{app:pxrd} & PXRD Formulation and Proofs & \pageref{app:pxrd}\\
\ref{app:retrieval} & Additional Retrieval Results & \pageref{app:retrieval}\\
\ref{app:transfer} & Transfer, Neighborhood, and Encoder Results & \pageref{app:transfer}\\
\ref{app:repro} & Reproducibility Details & \pageref{app:repro}\\
\ref{app:ot} & Optimal Transport Algorithms and Implementation & \pageref{app:ot}\\
\end{tabular*}}
\vspace{0.2em}
\noindent\rule{\textwidth}{0.4pt}
\vspace{0.6em}

\noindent \appref{app:pxrd} expands the PXRD formulation and gives the full proof of the
generalized Kullback--Leibler (GKL) decomposition in the main text.
Appendices~\ref{app:retrieval} and~\ref{app:transfer} contain the additional retrieval and encoder results cited
there. Appendices~\ref{app:repro} and~\ref{app:ot} document the data, models, training, solvers, and
algorithms.

\section{PXRD Formulation and Proofs}\label{app:pxrd}
\subsection{GSAS-II Crystallographic Forward Model}
\label{sec:pxrd_background}

Equation~\eqref{eq:pxrd_profile} in the main text summarizes a single-phase kinematical powder
profile \citep{rietveld1969profile,young1993rietveld}. The definitions below
follow standard crystallographic conventions
\citep{dinnebier2008principles,internationaltables2019powder}. The simulations
use the constant-wavelength X-ray powder model in GSAS-II \citep{toby2013}. Let
$\mathbf h=(h,k,l)^\top\in\mathbb Z^3$ be a
Miller-index vector. The reciprocal basis
$\mathbf a^*,\mathbf b^*,\mathbf c^*$ follows the crystallographic
convention and contains no factor of $2\pi$. Define
$\mathbf d^*_{hkl}=h\mathbf a^*+k\mathbf b^*+l\mathbf c^*$.
Then $d_{hkl}=\lVert\mathbf d^*_{hkl}\rVert^{-1}$, where
$\lVert\cdot\rVert$ is the Euclidean norm. With X-ray wavelength
$\lambda$ and Bragg half-angle $\theta_{hkl}$,
\begin{equation}
  s_{hkl}
  =\frac{\lVert\mathbf d^*_{hkl}\rVert}{2}
  =\frac{\sin\theta_{hkl}}{\lambda},
  \qquad
  2d_{hkl}\sin\theta_{hkl}=\lambda .
  \label{eq:bragg_reciprocal_supp}
\end{equation}
Here $s_{hkl}=1/(2d_{hkl})$ is the crystallographic scattering variable.
The Crystallographic Information File (CIF) supplies asymmetric-unit sites,
and GSAS-II applies the space-group operations. An equivalent
symmetry-expanded unit-cell expression for the complex structure factor is
\begin{equation}
  F_{hkl}
  =
  \sum_{j=1}^{N_{\mathrm{uc}}}
  o_j f_j(s_{hkl},\lambda)T_j(hkl)
  \exp\!\left[2\pi\mathrm{i}
  (hx_j+ky_j+lz_j)\right].
  \label{eq:structure_factor_supp}
\end{equation}
$N_{\mathrm{uc}}$ counts the atomic site contributions in one
symmetry-expanded unit cell; partially occupied or compositionally disordered
sites contribute with their own occupancies. The index $j$ identifies one
such contribution, $(x_j,y_j,z_j)$ gives its fractional coordinates,
$o_j$ is its site occupancy, $f_j$ is its complex X-ray scattering factor,
and $T_j(hkl)$ is its atomic displacement (Debye--Waller) factor. Tensors on
symmetry-generated sites are transformed with the corresponding space-group
operation. The symbol $\mathrm i$ is the imaginary unit. GSAS-II writes the
atomic scattering factor as
\begin{equation}
  f_j(s,\lambda)
  =
  f_j^0(s)+\Delta f'_j(\lambda)
  +\mathrm{i}\Delta f''_j(\lambda)
  \label{eq:atomic_form_factor_supp}
\end{equation}
with $s=\sin\theta/\lambda$ in \AA$^{-1}$. The normal term is
$f_j^0(s)=\sum_{n=1}^{4}a_{jn}\exp(-b_{jn}s^2)+c_j$.
GSAS-II takes the Cromer--Waber coefficients from its X-ray form-factor
table and evaluates the real and imaginary anomalous terms,
$\Delta f'_j$ and $\Delta f''_j$, with the Cromer--Liberman calculation
\citep{cromer1974atomic,cromer1981anomalous,toby2013}.
The coefficients $a_{jn}$ and $c_j$ use electron scattering units, and
$b_{jn}$ has units of \AA$^2$.
For the isotropic atomic displacement parameter $U_{\mathrm{iso},j}$,
\begin{equation}
  T_j^{\mathrm{iso}}(hkl)
  =\exp(-B_j s_{hkl}^2),
  \qquad B_j=8\pi^2U_{\mathrm{iso},j}.
  \label{eq:isotropic_dw_supp}
\end{equation}
$B_j$ is the corresponding isotropic $B$ parameter; both $B_j$ and
$U_{\mathrm{iso},j}$ have units of \AA$^2$. With
$a^*=\lVert\mathbf a^*\rVert$ and analogous definitions for $b^*$ and
$c^*$, the anisotropic factor is
\begin{equation}
  \begin{aligned}
  T_j^{\mathrm{aniso}}(hkl)
  &=\exp\!\Bigl\{-2\pi^2\bigl[
  h^2a^{*2}U_{11,j}+k^2b^{*2}U_{22,j}
  +l^2c^{*2}U_{33,j}\\
  &\qquad{}+2hka^*b^*U_{12,j}
  +2hla^*c^*U_{13,j}
  +2klb^*c^*U_{23,j}\bigr]\Bigr\}.
  \end{aligned}
  \label{eq:anisotropic_dw_supp}
\end{equation}
The six $U_{pq,j}$ components refer to the crystallographic basis and have
units of \AA$^2$. Thus $T_j(hkl)=T_j^{\mathrm{iso}}(hkl)$ for a site with
an isotropic displacement parameter, and
$T_j(hkl)=T_j^{\mathrm{aniso}}(hkl)$ for a site with anisotropic displacement
parameters. Equations~\eqref{eq:structure_factor_supp}--
\eqref{eq:anisotropic_dw_supp} follow the standard crystallographic
definitions
\citep{internationaltables2019powder,dinnebier2023intensity}. The integrated
kinematical intensity contains
$F_{hkl}F_{hkl}^*=|F_{hkl}|^2$; the asterisk denotes complex conjugation.

Let $\mathcal R$ be the allowed powder reflection list generated by GSAS-II.
Each record $\kappa\in\mathcal R$ stores a representative
$hkl$, its multiplicity $m_\kappa$, spacing $d_\kappa$, and calculated
$|F_\kappa|^2$. The constant-wavelength X-ray profile is
\begin{equation}
  \begin{aligned}
  y_{\mathrm{calc}}(2\theta_i)
    &=b(2\theta_i)+\sum_{\kappa\in\mathcal R}
      I_\kappa\Phi_\kappa(2\theta_i-2\theta_\kappa),\\
  I_\kappa
    &=S_hS_pV^*m_\kappa L_\kappa\Pi_\kappa
      O_\kappa A_\kappa E_\kappa|F_\kappa|^2 .
  \end{aligned}
  \label{eq:pxrd_full_supp}
\end{equation}
Here $i$ indexes the sampled $2\theta$ grid. The factors $S_h$ and $S_p$ are
the histogram and phase scales, $V^*=\sqrt{\det\mathbf G}=1/V$ is the
reciprocal unit-cell volume, and $\mathbf G$ is the reciprocal metric tensor.
$L_\kappa$, $\Pi_\kappa$, $O_\kappa$, $A_\kappa$, and $E_\kappa$ are the
Lorentz, polarization, preferred-orientation, absorption, and extinction
corrections. The function $\Phi_\kappa$ is the sampled peak-profile function,
centered at $2\theta_\kappa$, and $b$ is the background. Its subscript also
allows the width and asymmetry to vary by reflection. This is the GSAS-II
constant-wavelength X-ray (PXC) intensity expression
\citep{rietveld1969profile,young1993rietveld,dinnebier2023intensity}.
For the present instrument, $\mathrm{Polariz.}=0.5$ and
$\mathrm{Azimuth}=0$, so
\[
  L_\kappa=\frac{1}{2\sin^2\vartheta_\kappa\cos\vartheta_\kappa},
  \qquad
  \Pi_\kappa=\frac{1+\cos^2(2\vartheta_\kappa)}{2},
\]
where $\vartheta_\kappa$ is half the final peak-center angle. The simulation
sets $S_h=S_p=1$, $O_\kappa=A_\kappa=E_\kappa=1$, and
$b(2\theta_i)=1$. Its actual profile therefore reduces to
\[
  y_{\mathrm{calc}}(2\theta_i)
  =1+V^*\sum_{\kappa\in\mathcal R}
    m_\kappa L_\kappa\Pi_\kappa|F_\kappa|^2
    \Phi_\kappa(2\theta_i-2\theta_\kappa).
\]
Equation~\eqref{eq:pxrd_profile} in the main text uses $\phi$ for this peak profile and absorbs
the scale, volume, multiplicity, and correction factors into $w_{hkl}$. Its
$\approx$ sign marks this compact reflection-level notation.

\begin{table*}[!htbp]
  \centering
  \scriptsize
  \setlength{\tabcolsep}{4.0mm}
  \captionsetup{justification=raggedright,singlelinecheck=false}
  \begin{tabular}{@{}llll@{}}
    \toprule
    \textbf{Instrument field} & \textbf{Value}
      & \textbf{Instrument field} & \textbf{Value} \\
    \midrule
    Type & PXC & Source label & CuKa \\
    Lam & $1.5403$\,\AA{} & Zero
      & $0.049948848393250284^\circ$ \\
    Polariz. & $0.5$ & Azimuth & $0.0^\circ$ \\
    U & $418.99266761626023$ & V & $-227.1108015369226$ \\
    W & $48.813562400057585$ & X & $0.0$ \\
    Y & $15.737032230664841$ & Z & $0.0$ \\
    SH/L & $0.027292127957733888$ & Bank & $1$ \\
    Diff-type & Debye--Scherrer & Goniometer radius & $200.0$\,mm \\
    \bottomrule
  \end{tabular}
  \caption{GSAS-II constant-wavelength instrument parameters used for all
    simulations. \texttt{CuKa} labels the source; the single \texttt{Lam}
    value defines one wavelength, not a resolved
    $K\alpha_1/K\alpha_2$ doublet.}
  \label{tab:gsasii_instrument}
\end{table*}

\subsection{GSAS-II Simulation Protocol}
\label{sec:exact_gsasii_simulation}

For each CIF, the simulation imports one phase and creates one GSAS-II powder
histogram. Table~\ref{tab:gsasii_instrument} gives every
instrument parameter. \texttt{PXC} denotes constant-wavelength X-ray powder
data; \texttt{Lam} gives the wavelength; and \texttt{Zero} gives a constant
$2\theta$ offset. \texttt{Polariz.} and \texttt{Azimuth} define the
polarization factor. The coefficients \texttt{U}, \texttt{V}, and \texttt{W}
set Gaussian instrumental broadening, while \texttt{X}, \texttt{Y}, and
\texttt{Z} set Lorentzian broadening. \texttt{SH/L} is the
$(S+H)/L$ axial-divergence parameter in the
Finger--Cox--Jephcoat profile \citep{finger1994asymmetry}. The goniometer
radius enters the sample
displacement correction, which is zero here.

For reflection $\kappa$, define the ideal Bragg angle
$\theta_\kappa^{B}=\arcsin[\lambda/(2d_\kappa)]$. With zero sample
displacement and zero hydrostatic strain, GSAS-II places the peak at
\begin{equation}
  2\theta_\kappa
  =
  2\theta_\kappa^{B}
  +\mathrm{Zero}.
  \label{eq:gsasii_peak_position}
\end{equation}
Set $\vartheta_\kappa=\theta_\kappa^{B}+\mathrm{Zero}/2$, the half-angle of
this final center.
With the default Lorentzian size and microstrain mixing used here, the
GSAS-II Gaussian variance parameter and Lorentzian full width at half maximum
are
\begin{equation}
  \begin{aligned}
  \sigma_{\kappa}^{2}
    &=U\tan^2\vartheta_\kappa+V\tan\vartheta_\kappa+W,\\
  \gamma_{\kappa}
    &=\frac{X}{\cos\vartheta_\kappa}
      +Y\tan\vartheta_\kappa+Z
      +\gamma_{\mathrm{size},\kappa}
      +\gamma_{\mathrm{strain},\kappa},\\
  \gamma_{\mathrm{size},\kappa}
    &=\frac{1.8\lambda}
      {\pi D_{\mathrm{size}}\cos\vartheta_\kappa},\qquad
  \gamma_{\mathrm{strain},\kappa}
    =\frac{0.018e_\mu\tan\vartheta_\kappa}{\pi}.
  \end{aligned}
  \label{eq:gsasii_profile_widths}
\end{equation}
GSAS-II stores $\sigma_\kappa^2$ in centidegree-squared and
$\gamma_\kappa$ in centidegrees. Thus $U,V,W$ use
centidegree-squared units, while $X,Y,Z$ use centidegrees.
$D_{\mathrm{size}}=1\,\mu$m is the isotropic crystallite size, and
$e_\mu=1000$ is the isotropic microstrain in units of $10^{-6}$.
The numerical constants in Eq.~\eqref{eq:gsasii_profile_widths} return
centidegrees when $\lambda$ is in \AA{} and $D_{\mathrm{size}}$ is in
$\mu$m, as in GSAS-II.
The program floors both stored width parameters at $0.001$; the values in
Table~\ref{tab:gsasii_instrument} remain above that floor on both angular
grids. It then evaluates the Finger--Cox--Jephcoat axial-divergence-modified
Thompson--Cox--Hastings pseudo-Voigt profile
\citep{caglioti1958collimators,thompson1987profile,
finger1994asymmetry,dinnebier2021profiles}.

The fixed calculation uses these GSAS-II values \citep{toby2013}:
\begin{itemize}
  \item one phase with phase and histogram scales equal to 1;
  \item a three-term \emph{chebyschev-1} background with
    coefficients $(1,0,0)$;
  \item Debye--Scherrer geometry, cylinder absorption parameter $\mu R=0$,
    zero sample displacement, zero extinction, zero hydrostatic strain, and
    zero Babinet terms;
  \item March--Dollase ratio 1 for random orientation
    \citep{dollase1986orientation};
  \item isotropic crystallite size $1\,\mu$m and microstrain
    $1000\times10^{-6}$, both with Lorentzian mixing coefficient 1.
\end{itemize}
The CIF supplies the unit cell, space group, atom types, coordinates,
occupancies, and displacement parameters. GSAS-II generates the allowed
reflections and multiplicities. A missing occupancy defaults to $o_j=1$; a
missing isotropic displacement parameter defaults to
$U_{\mathrm{iso},j}=0.01$\,\AA$^2$. The importer converts
$B_{\mathrm{iso}}$ through $U_{\mathrm{iso}}=B_{\mathrm{iso}}/(8\pi^2)$
and uses anisotropic $U_{pq}$ values when present.

An empty refinement step evaluates the profile with zero refinement cycles,
so GSAS-II does not optimize any parameter. The simulation reads
\texttt{ycalc}, takes its absolute value, and discards the simulated observed
channel. The resulting profile is deterministic and contains no added
counting noise.

MP-100 uses a 2001-point source grid over
$2\theta\in[5^\circ,65^\circ]$, including both endpoints, with
$0.03^\circ$ spacing. The data loader linearly interpolates that curve to a
2048-point model grid, also including both endpoints, before normalization.
CrystaLLM-v1 and the four external transfer datasets use 4096 points over
$[5^\circ,90^\circ]$, including both endpoints, with no length change.
For each pattern, set $z_i=|y_{\mathrm{calc}}(2\theta_i)|$. The loader applies
\begin{equation}
  \widehat y_i
  =
  \frac{z_i-\min_k z_k}
       {\max_k z_k-\min_k z_k},
  \label{eq:pxrd_minmax_supp}
\end{equation}
where $i$ indexes a model-grid point and $k$ ranges over the same pattern.
The loader clips $\widehat y_i$ to $[0,1]$; a constant pattern maps to zero.

\FloatBarrier
\subsection{Formal Proof of \texorpdfstring{Proposition~\ref{thm:gkl_decomposition}}{Proposition 3.1}}
\label{app:proofs}

For $x,y\geq0$, define the extended scalar GKL term
\begin{equation}
  g(x,y)=x\log(x/y)-x+y,
  \label{eq:scalar_gkl_supp}
\end{equation}
with $g(0,0)=0$, $g(0,y)=y$, and $g(x,0)=+\infty$ for $x>0$.
The logarithm is natural. For arrays of the same shape,
$D_{\mathrm{GKL}}(\mathbf A\|\mathbf B)$ is the sum of $g(A_{ij},B_{ij})$
over every entry; for vectors, it is the corresponding one-index sum.
All KL sums use the standard lower-semicontinuous convention
$0\log(0/y)=0$, including $y=0$. For probability vectors $\mathbf q$ and
$\mathbf p$,
$\mathrm{KL}(\mathbf q\|\mathbf p)=\sum_jq_j\log(q_j/p_j)$ under the same
support convention.

\begin{propositionrestated}
Let $\mathbf Q,\mathbf P\in\mathbb R_+^{N\times M}$ have strictly positive
row sums $\mathbf r=\mathbf Q\mathbf1_M$ and
$\mathbf u=\mathbf P\mathbf1_M$. Assume $Q_{ij}>0$ implies $P_{ij}>0$, and
define the row-conditional distributions
\[
  q(j\mid i)=\frac{Q_{ij}}{r_i},
  \qquad
  p(j\mid i)=\frac{P_{ij}}{u_i}.
\]
Here $i\in\{1,\ldots,N\}$ indexes rows,
$j\in\{1,\ldots,M\}$ indexes columns, and $\mathbf1_n$ is the length-$n$
all-ones vector. The set $\mathbb R_+$ contains the nonnegative real
numbers. Then
\begin{equation}
  D_{\mathrm{GKL}}(\mathbf Q\|\mathbf P)
  =D_{\mathrm{GKL}}(\mathbf r\|\mathbf u)
  +\sum_{i=1}^{N}r_i\,
  \mathrm{KL}\!\left(q(\cdot\mid i)\|p(\cdot\mid i)\right).
  \label{eq:gkl_chain_supp}
\end{equation}
If the column sums
$\mathbf c=\mathbf Q^\top\mathbf1_N$ and
$\mathbf v=\mathbf P^\top\mathbf1_N$ are also strictly positive, the analogous
identity holds for column marginals and column conditionals.
\end{propositionrestated}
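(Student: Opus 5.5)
The plan is to prove the identity row by row. Both sides are sums over rows $i$: the left side is $\sum_i\sum_j g(Q_{ij},P_{ij})$, and the right side is $\sum_i\bigl[g(r_i,u_i)+r_i\,\mathrm{KL}(q(\cdot\mid i)\|p(\cdot\mid i))\bigr]$. So it suffices to fix a row $i$ with $r_i,u_i>0$ and show
\[
  \sum_{j=1}^{M} g(Q_{ij},P_{ij})
  = g(r_i,u_i)+r_i\,\mathrm{KL}\!\left(q(\cdot\mid i)\|p(\cdot\mid i)\right).
\]

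First I will use the support assumption to check that every term is finite. Entries with $Q_{ij}=0$ contribute $g(0,P_{ij})=P_{ij}$ under the extended convention, which is finite even when $P_{ij}=0$. Entries with $Q_{ij}>0$ have $P_{ij}>0$, so $g$ is given by the ordinary formula. In the KL sum, indices with $q(j\mid i)=0$ contribute $0$ by the stated convention. Since $r_i,u_i>0$, the term $g(r_i,u_i)$ is finite too. Hence no $\infty-\infty$ can arise, and I may freely regroup finite sums.

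The core computation restricts to the support $S_i=\{j:Q_{ij}>0\}$. Writing $Q_{ij}=r_i q(j\mid i)$ and $P_{ij}=u_i p(j\mid i)$, each $j\in S_i$ gives
\[
  Q_{ij}\log\frac{Q_{ij}}{P_{ij}} = r_i q(j\mid i)\log\frac{r_i}{u_i} + r_i q(j\mid i)\log\frac{q(j\mid i)}{p(j\mid i)}.
\]
I will sum this over $S_i$, using $\sum_{j\in S_i} q(j\mid i)=1$. The first piece then becomes $r_i\log(r_i/u_i)$ and the second becomes $r_i\,\mathrm{KL}(q(\cdot\mid i)\|p(\cdot\mid i))$.

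The linear terms $-Q_{ij}+P_{ij}$, summed over all $j$ (the $Q_{ij}=0$ entries supply exactly $+P_{ij}$), give $-r_i+u_i$. Combining the pieces yields $r_i\log(r_i/u_i)-r_i+u_i+r_i\,\mathrm{KL}=g(r_i,u_i)+r_i\,\mathrm{KL}$. Summing over $i$ then gives Eq.~\eqref{eq:gkl_chain_supp}.

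For the column version, I will apply the row result to $\mathbf Q^\top$ and $\mathbf P^\top$. Transposition preserves entrywise GKL and the support condition, and it turns column sums into row sums; the assumed strict positivity of $\mathbf c$ and $\mathbf v$ supplies the hypothesis. The only delicate step is the bookkeeping with zero entries: the $Q_{ij}=0$, $P_{ij}>0$ terms must be absorbed into the linear mass term rather than the logarithmic one. Separating the support set $S_i$ as above handles this cleanly.
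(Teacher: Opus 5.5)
Your proposal is correct and follows essentially the same route as the paper: substitute $Q_{ij}=r_i q(j\mid i)$ and $P_{ij}=u_i p(j\mid i)$, split the log ratio into a marginal part and a conditional part, use that each conditional sums to one, and obtain the column identity by transposing. Your explicit restriction to the support $S_i$, with the $Q_{ij}=0$ entries absorbed into the linear mass term, handles the zero entries more carefully than the paper's direct expansion, which relies on the convention $0\log(0/y)=0$ without comment.
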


\begin{proof}
Write $q_{ij}=q(j\mid i)$ and $p_{ij}=p(j\mid i)$. Then
$Q_{ij}=r_iq_{ij}$, $P_{ij}=u_ip_{ij}$, and both conditionals sum to one.
Set
$\mathcal K_i=\mathrm{KL}(q(\cdot\mid i)\|p(\cdot\mid i))$.
The support assumption makes each term finite on the support of $\mathbf Q$.
Direct expansion gives
\begingroup
\allowdisplaybreaks[4]
\begin{align}
  D_{\mathrm{GKL}}(\mathbf Q\|\mathbf P)
  &=
  \sum_{i,j}
  \left[
  r_iq_{ij}\log\frac{r_iq_{ij}}{u_ip_{ij}}
  -r_iq_{ij}+u_ip_{ij}
  \right]
  \nonumber\\
  &=
  \sum_{i,j}r_iq_{ij}\log\frac{r_i}{u_i}
  +\sum_{i,j}r_iq_{ij}\log\frac{q_{ij}}{p_{ij}}
  -\sum_i r_i\sum_jq_{ij}
  +\sum_i u_i\sum_jp_{ij}
  \nonumber\\
  &=
  \sum_i\left[
  r_i\log\frac{r_i}{u_i}-r_i+u_i
  \right]
  +\sum_i r_i\sum_jq_{ij}\log\frac{q_{ij}}{p_{ij}}
  \nonumber\\
  &=
  D_{\mathrm{GKL}}(\mathbf r\|\mathbf u)
  +\sum_i r_i\mathcal K_i.
\end{align}
\endgroup
This proves Eq.~\eqref{eq:gkl_chain_supp}. Applying the same row argument to
$\mathbf Q^\top$ and $\mathbf P^\top$, with
$\bar q(i\mid j)=Q_{ij}/c_j$ and
$\bar p(i\mid j)=P_{ij}/v_j$, yields
\begin{equation}
  D_{\mathrm{GKL}}(\mathbf Q\|\mathbf P)
  =D_{\mathrm{GKL}}(\mathbf c\|\mathbf v)
  +\!\sum_{j=1}^{M}c_j
  \mathrm{KL}\!\left(\bar q(\cdot\mid j)\|\bar p(\cdot\mid j)\right).
  \label{eq:gkl_chain_columns_supp}
\end{equation}
Here $\bar q(\cdot\mid j)$ and $\bar p(\cdot\mid j)$ are the target and
predicted row distributions conditional on column $j$.
\end{proof}

\begin{corollaryrestated}
Let $\pi(i)$ be the target column for row $i$ and
$Q_{ij}=r_i\mathbb{I}\{j=\pi(i)\}$, where $\mathbb{I}\{\cdot\}$ is the
indicator function. Then
\begin{equation}
  D_{\mathrm{GKL}}(\mathbf Q\|\mathbf P)
  =
  D_{\mathrm{GKL}}(\mathbf r\|\mathbf u)
  -\sum_i r_i\log p(\pi(i)\mid i).
  \label{eq:gkl_deterministic_supp}
\end{equation}
In the square paired case $M=N$, $\mathbf I_N$ denotes the $N\times N$
identity matrix. For $\mathbf Q=N^{-1}\mathbf I_N$, the second term is
hard-label row conditional cross entropy. If
$\mathbf u=\mathbf r=N^{-1}\mathbf1_N$, that cross entropy equals coupling-level
GKL.
\end{corollaryrestated}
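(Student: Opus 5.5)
The corollary is a specialization of Proposition~\ref{thm:gkl_decomposition}, so I will apply Eq.~\eqref{eq:gkl_chain_supp} directly to the deterministic target $Q_{ij}=r_i\mathbb{I}\{j=\pi(i)\}$. I will work under the standing hypotheses of the proposition: $r_i>0$ and $u_i>0$ for every $i$, and the support condition. Here the support condition reads $P_{i\pi(i)}>0$ for every $i$, so $p(\pi(i)\mid i)>0$ and every logarithm below is finite.

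\textbf{Step 1: marginal and conditional of the target.} The row sum of $\mathbf Q$ is $\sum_j r_i\mathbb{I}\{j=\pi(i)\}=r_i$, so the target marginal is exactly $\mathbf r$. The row conditional is the point mass $q(j\mid i)=\mathbb{I}\{j=\pi(i)\}$.

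\textbf{Step 2: evaluate the conditional KL.} Using the convention $0\log(0/y)=0$, only the entry $j=\pi(i)$ contributes, so
\[
\mathrm{KL}\bigl(q(\cdot\mid i)\|p(\cdot\mid i)\bigr)=1\cdot\log\frac{1}{p(\pi(i)\mid i)}=-\log p(\pi(i)\mid i).
\]
Substituting into Eq.~\eqref{eq:gkl_chain_supp} gives Eq.~\eqref{eq:gkl_deterministic_supp} at once. This convention is the only delicate point, because it makes the zero entries of $q(\cdot\mid i)$ vanish even where $p$ is also zero. Everything else is substitution.

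\textbf{Step 3: identity target.} For $M=N$, $\mathbf Q=N^{-1}\mathbf I_N$ corresponds to $\pi(i)=i$ and $r_i=N^{-1}$. The second term becomes $-N^{-1}\sum_i\log\bigl(P_{ii}/u_i\bigr)$. This is the batch-averaged hard-label cross entropy of the row-normalized plan, i.e.\ the conditional CE $\mathcal L_{\mathrm{CE}}$ used by PXtal-UOT$_{\mathrm{CE}}$.

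\textbf{Step 4: fixed marginals.} If $\mathbf u=\mathbf r=N^{-1}\mathbf 1_N$, every scalar term $g(r_i,u_i)=r_i\log 1-r_i+u_i$ equals zero, so $D_{\mathrm{GKL}}(\mathbf r\|\mathbf u)=0$ and the two objectives coincide exactly.

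\textbf{Step 5: finite UOT penalties.} Under finite penalties, $\mathbf u$ is a free output of Eq.~\eqref{eq:ueot_inner}. Because $\mathcal L_{\mathrm{CE}}$ depends only on the conditionals $p(\cdot\mid i)$, it is invariant under $\mathbf P\mapsto\operatorname{diag}(\mathbf c)\mathbf P$. The marginal term, by contrast, changes under this rescaling. Hence CE places no supervision on $D_{\mathrm{GKL}}(\mathbf r\|\mathbf u)$, which is the interpretive last sentence of the statement.
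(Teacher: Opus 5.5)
Your proposal is correct and follows the paper's proof. Like the paper, you identify $q(\cdot\mid i)$ as a point mass at $\pi(i)$, reduce the conditional KL to $-\log p(\pi(i)\mid i)$, and substitute into Eq.~\eqref{eq:gkl_chain_supp}; your remaining steps spell out the identity-target and $\mathbf u=\mathbf r$ specializations and the row-rescaling invariance, which the paper states without separate argument or defers to its row-rescaling paragraph.
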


\begin{proof}
The target conditional $q(\cdot\mid i)$ is a point mass at $\pi(i)$, so
\[
  \mathrm{KL}\!\left(q(\cdot\mid i)\|p(\cdot\mid i)\right)
  =-\log p(\pi(i)\mid i).
\]
Substitution into Eq.~\eqref{eq:gkl_chain_supp} proves the result.
\end{proof}

\paragraph{Effect of row rescaling.}
For $\boldsymbol\eta\in\mathbb R_{>0}^{N}$, set
$\mathbf P^{(\boldsymbol\eta)}
=\operatorname{diag}(\boldsymbol\eta)\mathbf P$. Its row marginal is
$\mathbf u^{(\boldsymbol\eta)}
=\boldsymbol\eta\odot\mathbf u$, while
\begin{equation}
  p^{(\boldsymbol\eta)}(j\mid i)
  =
  \frac{\eta_iP_{ij}}{\sum_\ell\eta_iP_{i\ell}}
  =p(j\mid i).
  \label{eq:conditional_scale_invariance}
\end{equation}
The operator $\operatorname{diag}$ places a vector on a diagonal,
$\odot$ denotes elementwise multiplication, and $\ell$ indexes columns in
the row-normalizing sum.
Conditional cross entropy is unchanged by any positive row rescaling. GKL
changes by
\begin{equation}
  D_{\mathrm{GKL}}(\mathbf Q\|
  \mathbf P^{(\boldsymbol\eta)})
  -D_{\mathrm{GKL}}(\mathbf Q\|\mathbf P)
  =
  \sum_i\left[u_i(\eta_i-1)-r_i\log\eta_i\right].
  \label{eq:gkl_row_scaling_difference}
\end{equation}
GKL changes with row mass. For fixed conditionals, its marginal term has the
unique minimizer
$\eta_i=r_i/u_i$. For its $i$th scalar contribution,
\[
  \begin{aligned}
  f_i(\eta_i)
  &=r_i\log\frac{r_i}{\eta_i u_i}-r_i+\eta_i u_i,\\
  f_i'(\eta_i)&=u_i-\frac{r_i}{\eta_i},
  \qquad
  f_i''(\eta_i)=\frac{r_i}{\eta_i^2}>0.
  \end{aligned}
\]
The primes denote derivatives with respect to $\eta_i$.
Strict convexity gives a unique mass-matching minimizer. Two distinct
rescalings can still have the same scalar GKL value.

\paragraph{Soft targets.}
The arrays $q=(q_{ij})$ and $p=(p_{ij})$ collect the row conditionals defined
above. Define their target-marginal-weighted conditional cross entropy and
target entropy by
\begin{align}
  \mathrm{CE}_{\mathbf r}(q,p)
  &=-\sum_i r_i\sum_jq_{ij}\log p_{ij},\\
  \mathcal H_{\mathbf r}(q)
  &=-\sum_i r_i\sum_jq_{ij}\log q_{ij}.
\end{align}
Equation~\eqref{eq:gkl_chain_supp} becomes
\begin{equation}
  D_{\mathrm{GKL}}(\mathbf Q\|\mathbf P)
  =
  D_{\mathrm{GKL}}(\mathbf r\|\mathbf u)
  +\mathrm{CE}_{\mathbf r}(q,p)
  -\mathcal H_{\mathbf r}(q).
  \label{eq:gkl_soft_ce_supp}
\end{equation}
The last term depends only on the fixed target. When
$\mathbf u=\mathbf r$, coupling-level GKL and weighted soft cross entropy
differ by that constant and have the same gradients with respect to
$\mathbf P$. Under UOT, $\mathbf u$ can vary, and GKL also penalizes the
predicted marginal through its first term.

\paragraph{Nonnegativity and equality.}
For $y>0$, write $g(x,y)=y\varphi(x/y)$ with
$t=x/y$ and $\varphi(t)=t\log t-t+1$. Since
$\varphi''(t)=1/t>0$ and $\varphi$ has its unique minimum
$\varphi(1)=0$, every scalar term is nonnegative and vanishes exactly when
$x=y$. Here the primes denote derivatives with respect to $t$. The boundary
values defined above satisfy the same property. It
follows that
\[
  D_{\mathrm{GKL}}(\mathbf Q\|\mathbf P)\geq0,
  \qquad
  D_{\mathrm{GKL}}(\mathbf Q\|\mathbf P)=0
  \Longleftrightarrow
  \mathbf P=\mathbf Q.
\]

\FloatBarrier
\FloatBarrier
\section{Additional Retrieval Results}\label{app:retrieval}
\label{app:retrieval_full}

Table~\ref{tab:mp100_backbone} adds MP-100 retrieval results for Matformer
\citep{yan2022periodic} and GemNet-T \citep{gasteiger2021gemnet} crystal
encoders. The main text gives the corresponding CSPNet results
\citep{jiao2023crystal}.

\begin{table*}[!htbp]
  \centering
  \small
  \setlength{\tabcolsep}{1mm}
  \begin{tabular}{@{}ll*{7}{r}@{}}
    \toprule
    & & \multicolumn{4}{c}{\textbf{Exact retrieval}}
      & \multicolumn{1}{c}{\textbf{Crystal system}}
      & \multicolumn{2}{c}{\textbf{Space group}} \\
    \cmidrule(lr){3-6}\cmidrule(lr){7-7}\cmidrule(lr){8-9}
    \textbf{Encoder} & \textbf{Model}
      & Acc@1$\uparrow$ & Acc@3$\uparrow$ & Acc@5$\uparrow$ & MRR$\uparrow$
      & Acc@1$\uparrow$ & Acc@1$\uparrow$ & Acc@3$\uparrow$ \\
    \midrule
    \multirow{4}{*}{Matformer} & PXtal-CLIP & 73.50 & 86.89 & 89.78 & 80.98 & 86.89 & 82.02 & 90.49 \\
     & PXtal-OT & \underline{76.33} & \underline{89.34} & \underline{91.78} & \underline{83.36} & \underline{88.46} & \underline{83.77} & \underline{92.18} \\
     & PXtal-UOT$_{\mathrm{CE}}$ & 73.13 & 86.28 & 89.13 & 80.60 & 86.41 & 81.51 & 89.89 \\
     & PXtal-UOT$_{\mathrm{GKL}}$ & \textbf{76.79} & \textbf{89.67} & \textbf{92.25} & \textbf{83.84} & \textbf{88.62} & \textbf{84.35} & \textbf{92.39} \\
    \midrule
    \multirow{4}{*}{GemNet-T} & PXtal-CLIP & 96.97 & 99.77 & 99.87 & 98.36 & 99.45 & 99.15 & 99.81 \\
     & PXtal-OT & \underline{97.19} & \textbf{99.92} & \textbf{99.95} & \underline{98.51} & 99.44 & \underline{99.19} & \textbf{99.92} \\
     & PXtal-UOT$_{\mathrm{CE}}$ & 97.10 & 99.77 & 99.85 & 98.44 & \underline{99.52} & 99.16 & 99.79 \\
     & PXtal-UOT$_{\mathrm{GKL}}$ & \textbf{97.44} & \underline{99.87} & \underline{99.92} & \textbf{98.64} & \textbf{99.63} & \textbf{99.27} & \underline{99.90} \\
    \bottomrule
  \end{tabular}
  \caption{Crystal encoder ablation for PXRD-to-crystal retrieval on MP-100.
    Columns give exact Acc@1/3/5 and MRR, crystal-system Acc@1, and space-group
    Acc@1/3, all in percent. Bold and underline mark the best and second-best
    models within each encoder.}
  \label{tab:mp100_backbone}
\end{table*}

\FloatBarrier
\section{Transfer, Neighborhood, and Encoder Results}\label{app:transfer}
\label{app:extended_results}

\begin{table*}[!htbp]
  \centering
  \small
  \setlength{\tabcolsep}{1mm}
  \begin{tabular*}{\textwidth}{@{\extracolsep{\fill}}ll*{7}{r}@{}}
    \toprule
    & & \multicolumn{4}{c}{\textbf{Exact retrieval}}
      & \multicolumn{1}{c}{\textbf{Crystal system}}
      & \multicolumn{2}{c}{\textbf{Space group}} \\
    \cmidrule(lr){3-6}\cmidrule(lr){7-7}\cmidrule(lr){8-9}
    \textbf{Dataset} & \textbf{Model}
      & Acc@1$\uparrow$ & Acc@3$\uparrow$ & Acc@5$\uparrow$ & MRR$\uparrow$
      & Acc@1$\uparrow$ & Acc@1$\uparrow$ & Acc@3$\uparrow$ \\
    \midrule
    \multirow{5}{*}{JARVIS-DFT-3D}
      & PXtal-CLIP
      & 61.66 & 76.27 & 80.34 & 70.09 & 82.16 & 72.30 & 84.03 \\
      & PXtal-OT
      & 64.98 & 79.93 & 83.89 & 73.48 & 84.93 & 76.42 & 87.37 \\
      & PXtal-FGOT
      & 65.36 & 79.90 & 83.68 & 73.61 & 84.33 & 75.99 & 87.28 \\
      & PXtal-UFGOT
      & \underline{68.89} & \underline{84.32} & \underline{88.62}
      & \underline{77.65} & \underline{89.53} & \underline{81.05}
      & \underline{91.65} \\
      & PXtal-UOT$_{\mathrm{GKL}}$
      & \textbf{70.18} & \textbf{85.39} & \textbf{89.32}
      & \textbf{78.74} & \textbf{90.22} & \textbf{82.18}
      & \textbf{91.88} \\
    \midrule
    \multirow{5}{*}{Alex-MP-20}
      & PXtal-CLIP
      & 54.48 & 69.89 & 75.41 & 63.93 & 78.83 & 70.48 & 84.14 \\
      & PXtal-OT
      & 56.87 & 72.33 & 77.53 & 66.25 & 80.62 & 72.41 & 85.71 \\
      & PXtal-FGOT
      & 56.39 & 72.03 & 77.13 & 65.82 & 80.07 & 71.98 & 85.61 \\
      & PXtal-UFGOT
      & \underline{58.51} & \underline{74.44} & \underline{79.65}
      & \underline{68.11} & \textbf{83.99} & \underline{75.37}
      & \textbf{88.90} \\
      & PXtal-UOT$_{\mathrm{GKL}}$
      & \textbf{59.24} & \textbf{74.75} & \textbf{79.79}
      & \textbf{68.60} & \underline{83.94} & \textbf{75.91}
      & \underline{88.85} \\
    \bottomrule
  \end{tabular*}
  \caption{Zero-shot PXRD-to-crystal retrieval on JARVIS-DFT-3D and
    Alex-MP-20. Models rank each target dataset's candidate library without
    target-set training or fine tuning. Metrics are in percent. Bold and
    underline mark the best and second-best models within each dataset.}
  \label{tab:transfer_full_inorganic}
\end{table*}

\begin{table*}[!htbp]
  \centering
  \small
  \setlength{\tabcolsep}{1mm}
  \begin{tabular*}{\textwidth}{@{\extracolsep{\fill}}ll*{7}{r}@{}}
    \toprule
    & & \multicolumn{4}{c}{\textbf{Exact retrieval}}
      & \multicolumn{1}{c}{\textbf{Crystal system}}
      & \multicolumn{2}{c}{\textbf{Space group}} \\
    \cmidrule(lr){3-6}\cmidrule(lr){7-7}\cmidrule(lr){8-9}
    \textbf{Dataset} & \textbf{Model}
      & Acc@1$\uparrow$ & Acc@3$\uparrow$ & Acc@5$\uparrow$ & MRR$\uparrow$
      & Acc@1$\uparrow$ & Acc@1$\uparrow$ & Acc@3$\uparrow$ \\
    \midrule
    \multirow{5}{*}{QMOF}
      & PXtal-CLIP
      & 11.09 & 20.20 & 25.21 & 18.53 & 53.12 & 36.33 & 60.14 \\
      & PXtal-OT
      & 16.32 & 26.93 & 32.25 & 24.56 & 51.67 & 37.87 & 62.44 \\
      & PXtal-FGOT
      & \underline{19.46} & \underline{31.93} & \underline{38.51}
      & \underline{28.83} & \underline{55.30} & \textbf{42.76}
      & \underline{66.62} \\
      & PXtal-UFGOT
      & 18.56 & 30.63 & 36.40 & 27.52 & 54.64 & 41.75 & 64.33 \\
      & PXtal-UOT$_{\mathrm{GKL}}$
      & \textbf{20.99} & \textbf{33.51} & \textbf{39.54}
      & \textbf{30.10} & \textbf{55.87} & \underline{42.54}
      & \textbf{67.45} \\
    \midrule
    \multirow{5}{*}{Perov-5}
      & PXtal-CLIP
      & 19.31 & 40.63 & 52.31 & 34.96 & 57.99 & 57.99
      & \underline{88.38} \\
      & PXtal-OT
      & 20.18 & 42.99 & 55.35 & 36.23 & 60.05 & 60.03 & 86.26 \\
      & PXtal-FGOT
      & 21.74 & 43.83 & 55.40 & 37.02 & 59.34 & 59.34
      & \textbf{88.93} \\
      & PXtal-UFGOT
      & \underline{23.46} & \underline{45.52} & \underline{57.20}
      & \underline{39.15} & \underline{62.99} & \underline{62.96}
      & 88.11 \\
      & PXtal-UOT$_{\mathrm{GKL}}$
      & \textbf{25.63} & \textbf{49.67} & \textbf{61.93}
      & \textbf{41.87} & \textbf{64.23} & \textbf{64.23}
      & 88.32 \\
    \bottomrule
  \end{tabular*}
  \caption{Zero-shot PXRD-to-crystal retrieval on QMOF and Perov-5. Models
    rank each target dataset's candidate library without target-set training
    or fine tuning. Metrics are in percent. Bold and underline mark the best
    and second-best models within each dataset.}
  \label{tab:transfer_full_frameworks}
\end{table*}

\subsection{Retrieval across PXRD Neighborhoods}
Table~\ref{tab:jarvis_neighborhood_p2c} lists the Acc@1/5/10 and MRR values
used in the quartile analysis in the main text.

\begin{table*}[!htbp]
  \centering
  \footnotesize
  \setlength{\tabcolsep}{0.8mm}
  \begin{tabular*}{\textwidth}{@{\extracolsep{\fill}}lrrrr@{\hspace{4mm}}lrrrr@{}}
    \toprule
    & \multicolumn{4}{c}{\textbf{Q1: lowest similarity}}
    & & \multicolumn{4}{c}{\textbf{Q2}} \\
    \cmidrule(lr){2-5}\cmidrule(lr){7-10}
    \textbf{Model} & Acc@1$\uparrow$ & Acc@5$\uparrow$ & Acc@10$\uparrow$
      & MRR$\uparrow$
    & \textbf{Model} & Acc@1$\uparrow$ & Acc@5$\uparrow$ & Acc@10$\uparrow$
      & MRR$\uparrow$ \\
    \midrule
    PXtal-CLIP & 70.45 & 88.29 & 92.42 & 78.34
      & PXtal-CLIP & 66.66 & 86.58 & 91.00 & 75.63 \\
    PXtal-OT & 74.13 & 91.16 & 94.58 & 81.69
      & PXtal-OT & 69.79 & 89.21 & 92.71 & 78.47 \\
    PXtal-FGOT & 74.13 & 91.50 & 94.74 & 81.83
      & PXtal-FGOT & 69.61 & 88.53 & 91.95 & 78.06 \\
    PXtal-UFGOT & \underline{77.11} & \underline{93.53}
      & \underline{96.24} & \underline{84.41}
      & PXtal-UFGOT & \underline{73.42} & \underline{91.84}
      & \underline{94.82} & \underline{81.50} \\
    PXtal-UOT$_{\mathrm{GKL}}$ & \textbf{78.00} & \textbf{94.32}
      & \textbf{96.66} & \textbf{85.19}
      & PXtal-UOT$_{\mathrm{GKL}}$ & \textbf{74.42} & \textbf{92.42}
      & \textbf{95.29} & \textbf{82.56} \\
    \midrule
    & \multicolumn{4}{c}{\textbf{Q3}}
    & & \multicolumn{4}{c}{\textbf{Q4: highest similarity}} \\
    \cmidrule(lr){2-5}\cmidrule(lr){7-10}
    \textbf{Model} & Acc@1$\uparrow$ & Acc@5$\uparrow$ & Acc@10$\uparrow$
      & MRR$\uparrow$
    & \textbf{Model} & Acc@1$\uparrow$ & Acc@5$\uparrow$ & Acc@10$\uparrow$
      & MRR$\uparrow$ \\
    \midrule
    PXtal-CLIP & 64.24 & 83.53 & 88.00 & 73.03
      & PXtal-CLIP & 45.29 & 62.95 & 66.95 & 53.33 \\
    PXtal-OT & 66.74 & 86.68 & 91.26 & 75.68
      & PXtal-OT & 49.24 & 68.50 & 72.66 & 58.07 \\
    PXtal-FGOT & 67.45 & 86.16 & 90.29 & 75.86
      & PXtal-FGOT & 50.18 & 68.53 & 73.03 & 58.65 \\
    PXtal-UFGOT & \underline{71.55} & \underline{91.05}
      & \underline{94.82} & \underline{80.24}
      & PXtal-UFGOT & \underline{53.47} & \underline{78.05}
      & \underline{83.47} & \underline{64.47} \\
    PXtal-UOT$_{\mathrm{GKL}}$ & \textbf{72.74} & \textbf{92.00}
      & \textbf{95.24} & \textbf{81.27}
      & PXtal-UOT$_{\mathrm{GKL}}$ & \textbf{55.61} & \textbf{78.53}
      & \textbf{83.92} & \textbf{65.97} \\
    \bottomrule
  \end{tabular*}
  \caption{PXRD-to-crystal retrieval on JARVIS-DFT-3D across PXRD neighborhood
    similarity quartiles. Q1 contains the neighborhoods with the lowest
    similarity and Q4 those with the highest. Metrics are reported in percent.
    Bold and underline indicate the best and second-best models within each
    quartile, respectively.}
  \label{tab:jarvis_neighborhood_p2c}
\end{table*}

\subsection{Crystal and PXRD Encoder Transfer}
\label{app:downstream_full}

The main text defines the downstream tasks and the probe and fine-tuning
protocols. Table~\ref{tab:pxrd_encoder_probe} adds MP-100 probe results for
PXRD encoders jointly pretrained with three crystal encoders.
Table~\ref{tab:matbench_main_stats} gives the Matbench metrics, their
variation across folds, and RMSE for the four regression tasks.

\begin{table*}[!htbp]
  \centering
  \small
  \setlength{\tabcolsep}{0.9mm}
  \begin{tabular}{@{}ll*{2}{c}@{\hspace{1.2mm}}*{2}{c}
    @{\hspace{1.2mm}}*{2}{c}@{\hspace{1.2mm}}*{2}{c}
    @{\hspace{1.2mm}}*{2}{c}@{}}
    \toprule
    \shortstack{\textbf{Crystal}\\\textbf{encoder}} & \textbf{Model}
      & \multicolumn{2}{c}{\textbf{Crystal system}}
      & \multicolumn{2}{c}{\textbf{Space group}}
      & \multicolumn{2}{c}{\textbf{Lattice lengths}}
      & \multicolumn{2}{c}{\textbf{Lattice angles}}
      & \multicolumn{2}{c}{\textbf{Unit-cell volume}} \\
    \cmidrule(lr){3-4}\cmidrule(lr){5-6}\cmidrule(lr){7-8}
    \cmidrule(lr){9-10}\cmidrule(lr){11-12}
      & & Acc. (\%)$\uparrow$ & F1 (\%)$\uparrow$
      & Acc. (\%)$\uparrow$ & F1 (\%)$\uparrow$
      & MAE (\AA)$\downarrow$ & $R^2\uparrow$
      & MAE ($^\circ$)$\downarrow$ & $R^2\uparrow$
      & MAE (\AA$^3$)$\downarrow$ & $R^2\uparrow$ \\
    \midrule
    \multirow{4}{*}{CSPNet}
      & PXtal-CLIP
      & 76.37 & 75.35 & 64.71 & 46.28
      & 0.80 & 0.76 & 6.49 & 0.49 & 103.50 & 0.77 \\
      & PXtal-OT
      & 79.14 & 77.98 & 67.13 & \underline{47.82}
      & \underline{0.74} & \underline{0.78}
      & \underline{6.20} & 0.50
      & 97.57 & \underline{0.78} \\
      & PXtal-UOT$_{\mathrm{CE}}$
      & \underline{80.22} & \underline{79.16} & \underline{67.42} & 47.47
      & \underline{0.74} & \textbf{0.79}
      & 6.24 & \underline{0.51}
      & \underline{95.92} & \textbf{0.80} \\
      & PXtal-UOT$_{\mathrm{GKL}}$
      & \textbf{81.33} & \textbf{80.39} & \textbf{69.50} & \textbf{51.40}
      & \textbf{0.72} & \textbf{0.79}
      & \textbf{5.95} & \textbf{0.53}
      & \textbf{93.37} & \textbf{0.80} \\
    \midrule
    \multirow{4}{*}{Matformer}
      & PXtal-CLIP
      & \underline{76.95} & \underline{75.31} & 63.34 & 43.81
      & \underline{0.80} & \underline{0.75}
      & \underline{6.52} & \underline{0.48}
      & 104.08 & \underline{0.76} \\
      & PXtal-OT
      & \textbf{77.69} & \textbf{76.56} & \underline{63.92} & \underline{44.13}
      & \underline{0.80} & \textbf{0.76}
      & 6.60 & 0.46
      & \underline{103.57} & \textbf{0.77} \\
      & PXtal-UOT$_{\mathrm{CE}}$
      & 75.88 & 74.64 & 62.41 & 42.71
      & \underline{0.80} & \underline{0.75}
      & 6.53 & 0.47
      & 103.95 & \textbf{0.77} \\
      & PXtal-UOT$_{\mathrm{GKL}}$
      & 76.67 & 75.21 & \textbf{64.11} & \textbf{45.15}
      & \textbf{0.78} & \textbf{0.76}
      & \textbf{6.45} & \textbf{0.49}
      & \textbf{102.34} & \textbf{0.77} \\
    \midrule
    \multirow{4}{*}{GemNet-T}
      & PXtal-CLIP
      & \underline{74.08} & 73.04 & \underline{59.65} & 41.02
      & \underline{0.97} & \underline{0.68}
      & \underline{7.23} & \underline{0.42}
      & 121.66 & \underline{0.70} \\
      & PXtal-OT
      & 73.19 & 71.96 & 58.89 & 39.60
      & 0.98 & 0.67
      & 7.29 & 0.41
      & 123.72 & 0.68 \\
      & PXtal-UOT$_{\mathrm{CE}}$
      & \underline{74.08} & \underline{73.15} & 59.57 & \underline{41.16}
      & \underline{0.97} & \underline{0.68}
      & 7.31 & \underline{0.42}
      & \textbf{120.85} & \textbf{0.71} \\
      & PXtal-UOT$_{\mathrm{GKL}}$
      & \textbf{74.79} & \textbf{73.86} & \textbf{60.52} & \textbf{42.38}
      & \textbf{0.96} & \textbf{0.69}
      & \textbf{7.15} & \textbf{0.43}
      & \underline{121.33} & \underline{0.70} \\
    \bottomrule
  \end{tabular}
  \caption{MP-100 PXRD encoder probe results. Row groups identify the crystal
    encoder used during joint pretraining. Columns report symmetry
    classification and unit-cell regression. Bold and underline mark the best
    and second-best models within each crystal-encoder group; ranks use
    unrounded values.}
  \label{tab:pxrd_encoder_probe}
\end{table*}

\begin{table*}[!htbp]
  \centering
  \small
  \setlength{\tabcolsep}{0.9mm}
  \begin{tabular*}{\textwidth}{@{\extracolsep{\fill}}l*{6}{c}@{}}
    \toprule
    \textbf{Model}
      & \shortstack{\textbf{Band gap}\\MAE (eV)}
      & \shortstack{\textbf{Formation energy}\\MAE (eV/atom)}
      & \shortstack{\textbf{Metal}\\AUROC}
      & \shortstack{\textbf{Metal}\\F1}
      & \shortstack{\textbf{Bulk modulus}\\MAE $\log_{10}(\mathrm{GPa})$}
      & \shortstack{\textbf{Shear modulus}\\MAE $\log_{10}(\mathrm{GPa})$} \\
    \midrule
    PXtal-CLIP
      & $0.1911\pm0.0021$ & $0.0231\pm0.0005$
      & $0.9045\pm0.0009$ & $0.8924\pm0.0010$
      & $0.0545\pm0.0031$ & $0.0729\pm0.0013$ \\
    PXtal-OT
      & $0.1882\pm0.0026$ & $0.0229\pm0.0005$
      & $0.9054\pm0.0019$ & $0.8933\pm0.0019$
      & $0.0530\pm0.0028$ & $0.0720\pm0.0015$ \\
    PXtal-FGOT
      & $0.1881\pm0.0011$ & $0.0228\pm0.0005$
      & $0.9039\pm0.0038$ & $0.8914\pm0.0040$
      & $0.0532\pm0.0029$ & $0.0716\pm0.0019$ \\
    PXtal-UFGOT
      & $0.1885\pm0.0025$ & $0.0228\pm0.0004$
      & $0.9040\pm0.0018$ & $0.8917\pm0.0020$
      & $0.0539\pm0.0025$ & $0.0727\pm0.0012$ \\
    PXtal-UOT$_{\mathrm{GKL}}$
      & $0.1885\pm0.0033$ & $0.0228\pm0.0006$
      & $0.9036\pm0.0015$ & $0.8913\pm0.0019$
      & $0.0538\pm0.0028$ & $0.0715\pm0.0010$ \\
    \midrule
    From scratch
      & $0.2155\pm0.0023$ & $0.0245\pm0.0004$
      & $0.8969\pm0.0026$ & $0.8837\pm0.0030$
      & $0.0643\pm0.0025$ & $0.0829\pm0.0017$ \\
    \bottomrule
  \end{tabular*}

  \medskip
  \setlength{\tabcolsep}{2mm}
  \begin{tabular*}{\textwidth}{@{\extracolsep{\fill}}l*{4}{c}@{}}
    \toprule
    \textbf{Model}
      & \shortstack{\textbf{Band gap}\\RMSE (eV)}
      & \shortstack{\textbf{Formation energy}\\RMSE (eV/atom)}
      & \shortstack{\textbf{Bulk modulus}\\RMSE $\log_{10}(\mathrm{GPa})$}
      & \shortstack{\textbf{Shear modulus}\\RMSE $\log_{10}(\mathrm{GPa})$} \\
    \midrule
    PXtal-CLIP
      & $0.4803\pm0.0062$ & $0.0597\pm0.0044$
      & $0.1085\pm0.0077$ & $0.1144\pm0.0032$ \\
    PXtal-OT
      & $0.4771\pm0.0117$ & $0.0594\pm0.0047$
      & $0.1069\pm0.0062$ & $0.1134\pm0.0035$ \\
    PXtal-FGOT
      & $0.4740\pm0.0063$ & $0.0600\pm0.0059$
      & $0.1080\pm0.0065$ & $0.1132\pm0.0033$ \\
    PXtal-UFGOT
      & $0.4736\pm0.0109$ & $0.0594\pm0.0053$
      & $0.1086\pm0.0059$ & $0.1145\pm0.0033$ \\
    PXtal-UOT$_{\mathrm{GKL}}$
      & $0.4751\pm0.0122$ & $0.0599\pm0.0052$
      & $0.1082\pm0.0068$ & $0.1128\pm0.0025$ \\
    \midrule
    From scratch
      & $0.5255\pm0.0102$ & $0.0620\pm0.0061$
      & $0.1208\pm0.0065$ & $0.1284\pm0.0040$ \\
    \bottomrule
  \end{tabular*}
  \caption{Five-fold Matbench crystal encoder transfer. The upper block gives
    the official MAE and AUROC metrics and also reports F1 for metal
    classification. The lower block adds RMSE for the four regression tasks.
    Values are mean $\pm$ standard deviation over the official folds.}
  \label{tab:matbench_main_stats}
  \label{tab:matbench_rmse}
\end{table*}

\FloatBarrier
\section{Reproducibility Details}\label{app:repro}
\label{app:evaluation_protocol}

\begin{table*}[!htbp]
  \centering
  \small
  \setlength{\tabcolsep}{1.5mm}
  \begin{tabular*}{\textwidth}{@{\extracolsep{\fill}}lrrrrl@{}}
    \toprule
    \textbf{Dataset} & \textbf{Train} & \textbf{Validation} & \textbf{Test}
      & \textbf{Total} & \textbf{PXRD grid} \\
    \midrule
    MP-100
      & 28,952 & 6,204 & 6,203 & 41,359
      & $2001\!\to\!2048$ points, $5^\circ$--$65^\circ$ \\
    CrystaLLM-v1
      & 2,047,874 & 227,542 & 10,286 & 2,285,702
      & 4096 points, $5^\circ$--$90^\circ$ \\
    \bottomrule
  \end{tabular*}
  \caption{Paired PXRD and crystal-structure datasets. Counts use the train,
    validation, and test splits reported throughout the paper. For MP-100,
    the arrow runs from the GSAS-II source grid to the linearly resampled
    model grid.}
  \label{tab:dataset_summary}
\end{table*}

\paragraph{MP-100 construction and PXRD simulation.}
MP-100 draws from Materials Project release \texttt{v2025.06.09}
\citep{jain2013materialsproject}. Selection requires an inorganic crystal
structure with at most 100 atoms per unit cell, an experimental Inorganic
Crystal Structure Database (ICSD) reference
\citep{hellenbrandt2004inorganic}, and an energy above the convex hull below
$0.25$~eV/atom. We remove disconnected neutral molecular fragments and
species that are gases at standard conditions, including H$_2$, O$_2$, and
noble gases, then apply the training pipeline's validity checks. Random seed
42 gives the split in Table~\ref{tab:dataset_summary}. GSAS-II
\citep{toby2013} simulates each pattern with the single-wavelength protocol in
\hyperref[sec:exact_gsasii_simulation]{Appendix~\ref*{app:pxrd}}; Table~\ref{tab:dataset_summary}
summarizes the input grids and MP-100 resampling.

\paragraph{CrystaLLM-v1 construction.}
We use the supplied CrystaLLM train, validation, and test CIF files according
to fixed manifests \citep{antunes2024crystal}. Each raw CIF is simulated with
the same GSAS-II protocol before crystal-side standardization.
Table~\ref{tab:dataset_summary} gives the split and input grid. Space-group
and crystal-system metadata come from the source record when available;
otherwise, pymatgen infers them from the standardized crystal structure
\citep{ong2013python}.

\subsection{Model and Training Details}

\paragraph{PXRD encoder configuration.}
PXtal uses a dense-patch Transformer \citep{vaswani2017attention}. It divides
the normalized intensity curve into contiguous patches and uses width 512,
feed-forward width 2048, and 32 attention heads. MP-100 uses six layers and
patch size 16 on the 2048-point pattern; CrystaLLM-v1 uses 12 layers and patch
size 32 on the 4096-point pattern. Both settings produce 128 patch tokens. The
encoder uses the Gaussian error linear unit (GELU), pre-layer normalization,
absolute position embeddings, 0.1 embedding and residual dropout, 0.1
attention dropout, and no activation dropout \citep{hendrycks2016gaussian}.

\paragraph{Crystal encoder configuration.}
MP-100 pairs the PXRD encoder with three crystal encoders. CSPNet follows
DiffCSP's fully connected periodic graph network \citep{jiao2023crystal}; its
fractional-coordinate message passing uses 10 layers of width 512. Matformer
uses five lattice-aware graph
Transformer layers, four attention heads, node width 384, an 8\,\AA{} cutoff,
and at most 12 neighbors \citep{yan2022periodic}. PXtal reads the pooled
384-dimensional vector before Matformer's configured 64-dimensional property
layer. GemNet-T uses four interaction blocks, 512-dimensional atom and edge
embeddings, a 7\,\AA{} cutoff, and at most 50 neighbors. It mean-pools the
512-dimensional atom embeddings and supplies a fixed zero conditioning latent
of width 512 \citep{gasteiger2021gemnet}.
CrystaLLM-v1 pretraining uses the MatterGen implementation of GemNet-T
\citep{zeni2025mattergen}.
Each encoder output passes through a bias-free linear map to the shared
512-dimensional space and is then $\ell_2$ normalized.

\paragraph{Software environment.}
Training and evaluation used Linux nodes with NVIDIA GH200 120GB Grace
Hopper Superchips.
The software stack is Python 3.11, PyTorch 2.6.0 with CUDA 12.6
\citep{paszke2019pytorch}, PyTorch Geometric 2.6.1 \citep{fey2019fast}, and
PyTorch Lightning 2.3.3 \citep{falcon2024pytorchlightning}. We materialize Matbench v0.1 with
\texttt{matbench} v0.6 \citep{dunn2020benchmarking}.

\begin{table}[!htbp]
  \centering
  \footnotesize
  \setlength{\tabcolsep}{1.8pt}
  \renewcommand{\arraystretch}{1.02}
  \caption*{\textbf{Pretraining and optimization settings.} Batch sizes are
    per GPU; parentheses give the global size.}
  \begin{tabular}{@{}
      >{\raggedright\arraybackslash}p{0.25\columnwidth}
      >{\raggedright\arraybackslash}p{0.31\columnwidth}
      >{\raggedright\arraybackslash}p{0.33\columnwidth}@{}}
    \toprule
    \textbf{Setting} & \textbf{MP-100} & \textbf{CrystaLLM-v1} \\
    \midrule
    \multicolumn{3}{@{}l}{\textit{Shared settings}} \\
    Optimizer
      & \multicolumn{2}{l}{AdamW \citep{loshchilov2019decoupled}} \\
    $\boldsymbol{\beta}$
      & \multicolumn{2}{l}{$(0.9,0.999)$} \\
    Denom.\ constant
      & \multicolumn{2}{l}{$10^{-6}$} \\
    Schedule
      & \multicolumn{2}{l}{Cosine decay; one update per batch} \\
    Seed; selection
      & \multicolumn{2}{l}{42; best validation performance} \\
    \midrule
    \multicolumn{3}{@{}l}{\textit{Dataset-specific settings}} \\
    Epochs; GPUs
      & 300 / 2
      & 100 / 16 (DDP) \\
    Batch size
      & 64 (128)
      & 512 (8192) \\
    Learning rate
      & $5\times10^{-4}$
      & $6\times10^{-4}$ \\
    Weight decay
      & $10^{-2}$
      & $10^{-3}$ \\
    Linear warmup
      & 24 epochs
      & 12 epochs \\
    Precision
      & Full
      & bfloat16 mixed \\
    Grad.\ clip
      & None
      & Unit norm at 1 \\
    Validation
      & Every 25 epochs
      & Every 10 epochs \\
    \bottomrule
  \end{tabular}
\end{table}

\FloatBarrier
\section{Optimal Transport Algorithms and Implementation}\label{app:ot}
\begin{figure*}[!t]
  \centering
  \begin{minipage}[t]{0.94\textwidth}
    \vspace{0pt}
    \hrule
    \vspace{2pt}
    \refstepcounter{algorithm}\label{alg:uot_ufgot_fgot_solver}%
    \noindent\textbf{Algorithm \thealgorithm}\ Unified proximal solver for
      PXtal-UOT, PXtal-UFGOT, and PXtal-FGOT
    \par\vspace{2pt}
    \hrule
    \vspace{5pt}
    \small
    \begin{algorithmic}[1]
      \setlength{\itemsep}{0.5pt}
      \REQUIRE Normalized embeddings $\mathbf Z^P,\mathbf Z^C$;
        $\mathtt{method}\in\{\mathrm{UOT},\mathrm{UFGOT},\mathrm{FGOT}\}$
      \REQUIRE $\varepsilon,\alpha_0,\alpha_1,\rho,
        \tau_{\mathrm{prox}}$; iterations $R,K,M$; stabilizer $\delta$
      \STATE $N\leftarrow\operatorname{rows}(\mathbf Z^P)$,\quad
        $\mathbf a,\mathbf b\leftarrow N^{-1}\mathbf1_N$
      \STATE $\mathbf C^\Theta\leftarrow
        \mathbf1_{N\times N}-\mathbf Z^P(\mathbf Z^C)^\top$
      \STATE $\mathbf D^P\leftarrow
        \mathbf Z^P(\mathbf Z^P)^\top$,\quad
        $\mathbf D^C\leftarrow
        \mathbf Z^C(\mathbf Z^C)^\top$
      \STATE $\mathbf P^{(0)}\leftarrow\mathbf a\mathbf b^\top$
      \IF{$\mathtt{method}=\mathrm{UOT}$}
        \STATE $\rho,\tau_{\mathrm{prox}}\leftarrow0$
      \ENDIF
      \FOR{$r=0,\ldots,R-1$}
        \STATE $\mathbf C_{\mathrm{rel}}^{(r)}\leftarrow
          \mathbf C^\Theta-\rho\mathbf D^P\mathbf P^{(r)}\mathbf D^C$
        \STATE $\widehat{\mathbf C}^{(r)}\leftarrow
          \mathbf C_{\mathrm{rel}}^{(r)}
          -\tau_{\mathrm{prox}}\log[
          \operatorname{clamp}_{\delta}(\mathbf P^{(r)})]$
        \IF{$\mathtt{method}=\mathrm{FGOT}$}
          \STATE $\mathbf P^{(r+1)}\leftarrow
            \operatorname{Sinkhorn}\!\left(
            \widehat{\mathbf C}^{(r)},
            \varepsilon+\tau_{\mathrm{prox}},K\right)$
        \ELSE
          \STATE $\mathbf P^{(r+1)}\leftarrow
            \operatorname{UOTScale}\!\left(
            \widehat{\mathbf C}^{(r)},\mathbf a,\mathbf b;
            \varepsilon,\alpha_0,\alpha_1,\tau_{\mathrm{prox}},M\right)$
        \ENDIF
      \ENDFOR
      \STATE \textbf{return} $\mathbf P^{(R)}$
    \end{algorithmic}
    \vspace{2pt}
    \hrule
  \end{minipage}
\end{figure*}

\label{app:ot_losses}
\setlength{\intextsep}{8pt plus 2pt minus 2pt}

\textbf{Notation from the main text.}
We retain the notation of Sec.~\ref{sec:methods}. Superscripts $P$ and $C$ label the PXRD and
crystal modalities, $N$ is the batch size, and $d$ is the embedding width.
The two encoders have parameters $\Theta=\{\theta,\psi\}$. Unless stated
otherwise, $\mathbf a=\mathbf b=N^{-1}\mathbf1_N$ are the prescribed uniform
marginals, and $\tilde{\mathbf P}=N^{-1}\mathbf I_N$ is the diagonal target
coupling. For any coupling $\mathbf P$,
$\mathbf u=\mathbf P\mathbf1_N$ and
$\mathbf v=\mathbf P^\top\mathbf1_N$ are its row and column marginals.
The function $\mathcal H(\mathbf P)$ is the coupling entropy, and
$\mathcal U(\mathbf a,\mathbf b)$ is the set of admissible couplings with
prescribed marginals. Section~\ref{app:proofs} defines
$D_{\mathrm{GKL}}$, including its boundary values.
The vector $\mathbf1_N$ contains $N$ ones, and $\mathbf I_N$ is the
$N\times N$ identity matrix.

The entropy weight is $\varepsilon$; $\alpha_0$ and $\alpha_1$ penalize
deviations of the row and column marginals from $\mathbf a$ and $\mathbf b$;
$\rho$ weights the bilinear score; and $\tau_{\mathrm{prox}}$ weights the
proximal term. The symbol $\mathbf P^{(r)}$ denotes outer proximal iterate
$r$ for the method selected in Algorithm~\ref{alg:uot_ufgot_fgot_solver}. The method-specific names for its
final coupling are given below.

\textbf{Solver notation.}
The algorithms stack the normalized embeddings in
$\mathbf Z^P,\mathbf Z^C\in\mathbb R^{N\times d}$, whose rows are
$\mathbf z_i^P$ and $\mathbf z_i^C$.
$\mathbf1_{N\times N}$ is the all-ones matrix.
Angle brackets denote the Frobenius inner product for matrix arguments.
$R$, $K$, and $M$ are the numbers of outer proximal steps, balanced
Sinkhorn updates, and unbalanced scaling updates; $r$, $k$, and $m$ index
them.
$\boldsymbol\xi,\boldsymbol\zeta\in\mathbb R_+^N$ are the balanced Sinkhorn
scaling vectors.
$\operatorname{rows}(\cdot)$ returns the number of rows,
$\operatorname{diag}(\mathbf x)$ places $\mathbf x$ on a diagonal, and
$\oslash$ denotes elementwise division. We use
$\operatorname{LSE}_{j}(Y_{ij})=\log\sum_j\exp(Y_{ij})$, with the analogous
column-wise definition for $\operatorname{LSE}_{i}$. In the unbalanced solver,
$\mathbf A,\mathbf B\in\mathbb R^N$ are the row and column log-domain
correction vectors,
$\mathbf Y\in\mathbb R^{N\times N}$ is the current log-coupling, and
$\varepsilon_{\mathrm{eff}}=\varepsilon+\tau_{\mathrm{prox}}$ is the
effective entropy weight.
The vectors $\boldsymbol\ell_P$ and $\boldsymbol\ell_C$ hold PXRD-row and
crystal-column log-sum-exp values, and $\mathbf0$ is a zero vector of the
required length. Matrix logarithms and exponentials act entrywise; gradients
pass through every transport iteration. Before evaluating $\log\mathbf P$,
$\operatorname{clamp}_{\delta}$ floors each entry at $\delta=10^{-8}$.

\textbf{Learned cost and bilinear score.}
Algorithms~\ref{alg:uot_ufgot_fgot_solver} and~\ref{alg:uot_scaling} instantiate the main-text relation matrices
$\mathbf D^P$ and $\mathbf D^C$ as within-modality cosine Gram matrices:
\[
  \begin{aligned}
    \mathbf C^\Theta
      &=\mathbf1_{N\times N}-\mathbf Z^P(\mathbf Z^C)^\top,\\
    \mathbf D^P&=\mathbf Z^P(\mathbf Z^P)^\top,\qquad
    \mathbf D^C=\mathbf Z^C(\mathbf Z^C)^\top .
  \end{aligned}
\]
PXtal-FGOT and PXtal-UFGOT use the bilinear score and the outer-step cost
\[
  \begin{aligned}
    \mathcal B(\mathbf P;\mathbf D^P,\mathbf D^C)
      &=\langle\mathbf P,\mathbf D^P\mathbf P\mathbf D^C\rangle,\\
    \mathbf C_{\mathrm{rel}}^{(r)}
      &=\mathbf C^\Theta-\rho\mathbf D^P\mathbf P^{(r)}\mathbf D^C .
  \end{aligned}
\]
The first term is the cross-modal cosine-distance cost. The second rewards
couplings that preserve within-modality cosine similarities.

\subsection{Balanced Sinkhorn Scaling and PXtal-OT}

Algorithms~\ref{alg:sinkhorn_solver} and \ref{alg:pxtal_ot_solver} state the
balanced solver and PXtal-OT loss. With Gibbs kernel
$\mathbf K_\varepsilon=\exp(-\mathbf C/\varepsilon)$ and scaling vectors
$\boldsymbol\xi,\boldsymbol\zeta$, their updates enforce
$\mathbf P\mathbf1_N=\mathbf a$ and
$\mathbf P^\top\mathbf1_N=\mathbf b$. For $\mathbf C^\Theta$,
Corollary~\ref{cor:conditional_ce} gives the exact balanced-coupling identity
\begin{equation}
  \begin{aligned}
    \mathbf P_\Theta^{\mathrm{OT}}
      &=\operatorname{Sinkhorn}(\mathbf C^\Theta,\varepsilon,K),\\
    \mathcal L_{\mathrm{OT}}
      &=\mathrm{KL}\!\left(
        \tilde{\mathbf P}\|\mathbf P_\Theta^{\mathrm{OT}}\right)
        =\mathcal L_{\mathrm{CE}}\!\left(
        \mathbf P_\Theta^{\mathrm{OT}}\right)
      =-\frac{1}{N}\sum_{i=1}^{N}
        \log\frac{P_{\Theta,ii}^{\mathrm{OT}}}
        {\sum_{j=1}^{N}P_{\Theta,ij}^{\mathrm{OT}}}.
  \end{aligned}
  \label{eq:balanced_solver_supp}
\end{equation}
Indices $i$ and $j$ label PXRD rows and crystal columns. The equality uses
the fixed row marginal
$\sum_jP_{\Theta,ij}^{\mathrm{OT}}=N^{-1}$.

\begin{algorithm}[H]
  \caption{Balanced Sinkhorn scaling}
  \label{alg:sinkhorn_solver}
  \small
  \begin{algorithmic}[1]
    \REQUIRE Cost $\mathbf C$, entropy weight $\varepsilon$, iterations $K$
    \STATE $N\leftarrow\operatorname{rows}(\mathbf C)$,\quad
      $\mathbf a,\mathbf b\leftarrow N^{-1}\mathbf1_N$
    \STATE $\mathbf K_\varepsilon\leftarrow
      \exp(-\mathbf C/\varepsilon)$,\quad
      $\boldsymbol\xi\leftarrow\mathbf a$,\quad
      $\boldsymbol\zeta\leftarrow\mathbf b$
    \FOR{$k=1,\ldots,K$}
      \STATE $\boldsymbol\zeta\leftarrow
        \mathbf b\oslash(\mathbf K_\varepsilon^\top\boldsymbol\xi)$
      \STATE $\boldsymbol\xi\leftarrow
        \mathbf a\oslash(\mathbf K_\varepsilon\boldsymbol\zeta)$
    \ENDFOR
    \STATE \textbf{return}
      $\operatorname{diag}(\boldsymbol\xi)\mathbf K_\varepsilon
       \operatorname{diag}(\boldsymbol\zeta)$
  \end{algorithmic}
\end{algorithm}

\begin{algorithm}[H]
  \caption{PXtal-OT objective}
  \label{alg:pxtal_ot_solver}
  \small
  \begin{algorithmic}[1]
    \REQUIRE Normalized embeddings $\mathbf Z^P,\mathbf Z^C$;
      $\varepsilon,K$
    \STATE $N\leftarrow\operatorname{rows}(\mathbf Z^P)$
    \STATE $\mathbf C^\Theta\leftarrow
      \mathbf1_{N\times N}-\mathbf Z^P(\mathbf Z^C)^\top$
    \STATE $\mathbf P_\Theta^{\mathrm{OT}}\leftarrow
      \operatorname{Sinkhorn}(\mathbf C^\Theta,\varepsilon,K)$
    \STATE \textbf{return}
      $\mathrm{KL}(\tilde{\mathbf P}\|
       \mathbf P_\Theta^{\mathrm{OT}})
       =\mathcal L_{\mathrm{CE}}(\mathbf P_\Theta^{\mathrm{OT}})$
  \end{algorithmic}
\end{algorithm}

\subsection{PXtal-UOT and PXtal-UFGOT}

PXtal-UOT$_{\mathrm{GKL}}$ and PXtal-UOT$_{\mathrm{CE}}$ use the same
log-domain generalized Sinkhorn solver for entropy-regularized UOT
\citep{chizat2018scaling,xu2023regularized}
(Algorithms~\ref{alg:uot_ufgot_fgot_solver} and
\ref{alg:uot_scaling}). Both produce
$\mathbf P_\Theta^{\mathrm{UOT}}$ with relaxed row and column marginals.
PXtal-UOT$_{\mathrm{CE}}$ applies conditional CE after row normalization;
PXtal-UOT$_{\mathrm{GKL}}$ applies
$D_{\mathrm{GKL}}(\tilde{\mathbf P}\|
\mathbf P_\Theta^{\mathrm{UOT}})$ to the full measure. They use the same
coupling, but GKL also penalizes its marginal mass.
For a coupling $\mathbf P$, the row-conditional loss is
\[
  \mathcal L_{\mathrm{CE}}(\mathbf P)
  =-\frac{1}{N}\sum_{i=1}^{N}
    \log\frac{P_{ii}}{\sum_{j=1}^{N}P_{ij}}.
\]

PXtal-UFGOT adds KL-proximal outer steps and the bilinear score from the main
paper. The score compares PXRD rows $i,i'$ with crystal columns $j,j'$,
weighted by the coupling entries that join them. Since
$\mathbf D^P$ and $\mathbf D^C$ are symmetric, the gradient is
\[
  \nabla_{\mathbf P}\mathcal B(\mathbf P;\mathbf D^P,\mathbf D^C)
  =2\mathbf D^P\mathbf P\mathbf D^C .
\]
Absorbing the factor of two into $\rho$ gives the iteration-$r$ cost
\[
  \mathbf C_{\mathrm{rel}}^{(r)}
  =\mathbf C^\Theta-\rho\mathbf D^P\mathbf P^{(r)}\mathbf D^C .
\]

Algorithm~\ref{alg:uot_scaling} uses $M$ updates for each PXtal-UFGOT
proximal subproblem. Since
$-\mathcal H(\mathbf P)=\sum_{ij}P_{ij}(\log P_{ij}-1)$, expanding the
proximal term and dropping constants gives
\begin{equation}
  \begin{aligned}
  &-(\varepsilon+\tau_{\mathrm{prox}})\mathcal H(\mathbf P)
  +\left\langle
  \mathbf C_{\mathrm{rel}}^{(r)}
  -\tau_{\mathrm{prox}}\log\mathbf P^{(r)},
  \mathbf P\right\rangle\\
  &+\alpha_0D_{\mathrm{GKL}}(\mathbf P\mathbf1_N\|\mathbf a)
  +\alpha_1D_{\mathrm{GKL}}(\mathbf P^\top\mathbf1_N\|\mathbf b).
  \end{aligned}
  \label{eq:ufgot_proximal_expansion}
\end{equation}
The proximal weight therefore enters the cost correction
$-\tau_{\mathrm{prox}}\log\mathbf P^{(r)}$ and the effective entropy weight
$\varepsilon+\tau_{\mathrm{prox}}$.
Each log-domain step updates both correction vectors.
The proximal weight $\tau_{\mathrm{prox}}$ differs from the PXtal-CLIP
temperature $\tau$.
Table~\ref{tab:ot_hyperparams} gives all coefficient and iteration settings.

\begin{algorithm}[t]
  \caption{Log-domain unbalanced scaling}
  \label{alg:uot_scaling}
  \small
  \begin{algorithmic}[1]
    \REQUIRE Cost $\mathbf C$; priors $\mathbf a,\mathbf b$;
      $\varepsilon,\alpha_0,\alpha_1,\tau_{\mathrm{prox}}$; iterations $M$
    \STATE $\varepsilon_{\mathrm{eff}}\leftarrow
      \varepsilon+\tau_{\mathrm{prox}}$
    \STATE $\mathbf A^{(0)},\mathbf B^{(0)}\leftarrow\mathbf0$,\quad
      $\mathbf Y^{(0)}\leftarrow-\mathbf C/\varepsilon_{\mathrm{eff}}$
    \FOR{$m=0,\ldots,M-1$}
      \STATE $\ell^{(m)}_{P,i}\leftarrow
        \operatorname{LSE}_{j}(Y^{(m)}_{ij})$
      \STATE $\ell^{(m)}_{C,j}\leftarrow
        \operatorname{LSE}_{i}(Y^{(m)}_{ij})$
      \STATE $\mathbf A^{(m+1)}\leftarrow
        \dfrac{\alpha_0}
        {\alpha_0+\varepsilon_{\mathrm{eff}}}
        \left(\mathbf A^{(m)}
        +\log\mathbf a-\boldsymbol\ell_P^{(m)}\right)$
      \STATE $\mathbf B^{(m+1)}\leftarrow
        \dfrac{\alpha_1}
        {\alpha_1+\varepsilon_{\mathrm{eff}}}
        \left(\mathbf B^{(m)}
        +\log\mathbf b-\boldsymbol\ell_C^{(m)}\right)$
      \STATE $\mathbf Y^{(m+1)}\leftarrow
        -\mathbf C/\varepsilon_{\mathrm{eff}}
        +\mathbf A^{(m+1)}\mathbf1_N^\top
        +\mathbf1_N(\mathbf B^{(m+1)})^\top$
    \ENDFOR
    \STATE \textbf{return} $\exp(\mathbf Y^{(M)})$
  \end{algorithmic}
\end{algorithm}

\subsection{PXtal-FGOT}

PXtal-FGOT is the hard-marginal branch of
Algorithm~\ref{alg:uot_ufgot_fgot_solver}. It uses the same bilinear score
and step-$r$ cost
$\mathbf C_{\mathrm{rel}}^{(r)}$ as PXtal-UFGOT, but constrains the coupling to
$\mathcal U(\mathbf a,\mathbf b)$. Its proximal subproblem is
\begin{equation}
  \mathbf P^{(r+1)}
  \approx\underset{\mathbf P\in\mathcal U(\mathbf a,\mathbf b)}
  {\operatorname{arg\,min}}\quad
  \langle\mathbf C_{\mathrm{rel}}^{(r)},\mathbf P\rangle
  -\varepsilon\mathcal H(\mathbf P)
  +\tau_{\mathrm{prox}}
    D_{\mathrm{GKL}}(\mathbf P\|\mathbf P^{(r)}).
  \label{eq:fgot_proximal_subproblem}
\end{equation}
The approximation marks a finite $K$-step Sinkhorn solve in each of the $R$
outer steps. The solver uses the corrected cost
$\mathbf C_{\mathrm{rel}}^{(r)}
-\tau_{\mathrm{prox}}\log[
\operatorname{clamp}_{\delta}(\mathbf P^{(r)})]$ and entropy weight
$\varepsilon+\tau_{\mathrm{prox}}$. PXtal-UFGOT instead calls
$\operatorname{UOTScale}$ with two marginal GKL penalties. The two variants
share the outer update and differ only in the inner marginal constraints.

Both fused variants use the coefficient $\rho$ in
Table~\ref{tab:ot_hyperparams}. In the FGOT branch,
$\rho=\tau_{\mathrm{prox}}=0$ and one outer step recover PXtal-OT.

Algorithm~\ref{alg:uot_ufgot_fgot_solver} uses $\mathbf P^{(r)}$ as a method-local iterate. With the
iteration counts in Table~\ref{tab:ot_hyperparams}, its outputs are
{\small
\[
  \mathbf P_\Theta^{\mathrm{UOT}}=\mathbf P^{(1)},\quad
  \mathbf P_\Theta^{\mathrm{UFGOT}}=\mathbf P^{(R)},\quad
  \mathbf P_\Theta^{\mathrm{FGOT}}=\mathbf P^{(R)}.
\]
}
PXtal-UOT$_{\mathrm{CE}}$ and PXtal-UOT$_{\mathrm{GKL}}$ share
$\mathbf P_\Theta^{\mathrm{UOT}}$ and apply
$\mathcal L_{\mathrm{CE}}(\mathbf P_\Theta^{\mathrm{UOT}})$ and
$D_{\mathrm{GKL}}(\tilde{\mathbf P}\|
\mathbf P_\Theta^{\mathrm{UOT}})$, respectively.
PXtal-UFGOT applies
$D_{\mathrm{GKL}}(\tilde{\mathbf P}\|
\mathbf P_\Theta^{\mathrm{UFGOT}})$.
PXtal-FGOT applies
$\mathrm{KL}(\tilde{\mathbf P}\|
\mathbf P_\Theta^{\mathrm{FGOT}})
=\mathcal L_{\mathrm{CE}}(\mathbf P_\Theta^{\mathrm{FGOT}})$, as for
PXtal-OT.

\begin{center}
  \centering
  \scriptsize
  \setlength{\tabcolsep}{1.3pt}
  \renewcommand{\arraystretch}{1.02}
  \begin{tabular}{@{}lccccc@{}}
    \toprule
    \textbf{Model} & $\boldsymbol{\varepsilon}$ & $\boldsymbol{\rho}$
      & $\boldsymbol{\alpha_0=\alpha_1}$
      & $\boldsymbol{\tau_{\mathrm{prox}}}$
      & \textbf{O/I} \\
    \midrule
    \multicolumn{6}{@{}l}{\textbf{MP-100}} \\
    OT & 0.07 & -- & -- & -- & --/3 \\
    UOT$_{\mathrm{CE}}$ & 0.07 & 0 & 0.01 & 0 & 1/3 \\
    UOT$_{\mathrm{GKL}}$ & 0.07 & 0 & 0.01 & 0 & 1/3 \\
    \midrule
    \multicolumn{6}{@{}l}{\textbf{CrystaLLM-v1}} \\
    OT & 0.07 & -- & -- & -- & --/3 \\
    UOT$_{\mathrm{GKL}}$ & 0.07 & 0 & 0.01 & 0 & 1/3 \\
    FGOT & 0.07 & 0.01 & -- & 0.01 & 3/3 \\
    UFGOT & 0.07 & 0.01 & 0.01 & 0.01 & 3/3 \\
    \bottomrule
  \end{tabular}
  \captionof{table}{PXtal solver settings. O/I gives outer/inner iterations.}
  \label{tab:ot_hyperparams}
\end{center}

\end{document}